\documentclass[letterpaper]{article} 
\usepackage{aaai2026}  
\usepackage{times}  
\usepackage{helvet}  
\usepackage{courier}  
\usepackage[hyphens]{url}  
\usepackage{graphicx} 
\usepackage{natbib}  
\usepackage{caption} 
\usepackage{algorithm}
\usepackage{tikz}
\usepackage{booktabs}

\DeclareCaptionStyle{ruled}{labelfont=normalfont,labelsep=colon,strut=off} 
\usepackage[noend]{algpseudocode}
\algrenewcommand\algorithmicindent{1em} 
\usepackage{subcaption}
\usepackage{amsthm} 
\usepackage{placeins}
\usepackage{xspace}
\usepackage{xcolor}
\usepackage{amsmath}
\usepackage{amssymb}
\usepackage{dsfont}

\DeclareMathOperator{\OPT}{OPT}
\DeclareMathOperator{\Reach}{Reach}
\DeclareMathOperator{\first}{first}
\DeclareMathOperator{\last}{last}
\DeclareMathOperator{\cost}{cost}

\newtheorem{example}{Example}
\newtheorem{theorem}{Theorem}[section]
\newtheorem{lemma}[theorem]{Lemma}
\newtheorem{proposition}[theorem]{Proposition}
\newtheorem{corollary}[theorem]{Corollary}

\newtheorem{problem}{Problem}
\newtheorem{assumption}[theorem]{Assumption}

\newtheorem{observation}[theorem]{Observation}

\usepackage{comment}
\newif\ifshowexamples
\showexamplestrue
\ifshowexamples\else\excludecomment{example}\fi

\newif\ifshowfront
\showfronttrue

\newcommand{\NP}{{\small \ensuremath{\mathsf{NP}}\xspace}}

\title{Divide and Collapse:\\
MAPF-Collapse via Exact Decomposition into Independent Sub-Instances}
\author{
    Oren Salzman\textsuperscript{\rm 1}
}
\affiliations{
    \textsuperscript{\rm 1}Technion -- Israel Institute of Technology\\
    osalzman@cs.technion.ac.il
}

\begin{document}

\newcommand\algname[1]{\textsf{#1}\xspace}
\newcommand{\probname}[1]{\textsc{#1}\xspace}
\newcommand{\MAPFC}{\probname{MAPF-Collapse}}
\newcommand{\MAPF}{\probname{MAPF}}
\newcommand{\Judgelight}{\algname{Judgelight}}
\newcommand{\JL}{\algname{JL}}
\newcommand{\CCBS}{\algname{C-CBS}}
\newcommand{\CBS}{\algname{CBS}}
\newcommand{\oneMAPFC}{\algname{1-MAPFC}}
\newcommand{\MAPFCompress}{\algname{Divide-and-Collapse}}
\newcommand{\DnC}{\algname{DnC}}
\newcommand{\DnCJL}{\algname{DnC+JL}}
\newcommand{\DnCCCBS}{\algname{DnC+C-CBS}}
\newcommand{\DnCHyb}{\algname{DnC+C-CBS$\rightarrow$JL}}
\newcommand{\regimetau}{40} 
\newcommand{\Collapse}{\texttt{Collapse}}
\newcommand{\ignore}[1]{}


\newcommand{\myemph}[1]{{\color{teal}\emph{#1}}}

\maketitle

\begin{abstract}
In this work we study the problem of \MAPFC, a post-optimization step for Multi-Agent Path Finding (MAPF) plans where we are given a feasible plan produced by a modern MAPF solver and are tasked with removing avoidable moves while preserving feasibility.
This \NP-hard problem naturally arises when using learning-based state-of-the-art (SOTA) solvers which construct plans that contain redundant moves that can be removed.
Recently, Tang et al.\ presented \Judgelight, which uses Integer Linear Programming (ILP) to solve \MAPFC. Importantly, the ILP is constructed over all agents jointly, so its cost is governed by the full instance rather than by the small coupled residue that actually requires joint reasoning.
Our key insight, motivating this work, is that \MAPFC instances naturally decompose into independent sub-problems, most of which involve a single agent and can be solved without any inter-agent reasoning.
To this end, we first identify which agents need to coordinate their motion and partition the instance into sub-problems accordingly.
For the cases where no coordination is required, we introduce an extremely lightweight solver that is $\approx\!1{,}900\times$ faster than \Judgelight.
For cases where coordination is required, \Judgelight\ can be used but we introduce an alternative \CBS-like solver which is more efficient on easier problems. 
The resulting framework is exact, uses no commercial ILP solver, and matches \Judgelight's quality while running substantially faster on the coordination-light majority of instances; on the coordination-heavy instances we propose a regime-aware hybrid planner that falls back to \Judgelight. Over all benchmarks tested, this planner achieves a median $10.5\times$ per-instance speedup over \Judgelight.
\end{abstract}

\ifshowfront
\section{Introduction}
\label{sec:intro}

\begin{figure*}[t]
\centering
\setlength{\tabcolsep}{4pt}
\begin{subfigure}[b]{0.18\textwidth}
\centering
\begin{tikzpicture}[vert/.style={circle,draw,minimum size=6.5mm,inner sep=0pt,font=\small}]
\node[vert] (b) at (0, 0.5)      {$b$};
\node[vert] (a) at (-1.15, 0.5)  {$a$};
\node[vert] (c) at (1.15, 0.5)   {$c$};
\node[vert] (d) at (-1.15, -0.5) {$d$};
\node[vert] (e) at (0, -0.5)     {$e$};
\node[vert] (f) at (1.15, -0.5)  {$f$};
  \draw (a)--(b);\draw (c)--(b);\draw (f)--(b);\draw (d)--(b);\draw (e)--(b);
\end{tikzpicture}
\caption{Input Graph $G$.}
\label{fig:teaser-graph}
\end{subfigure}
\hfill
\begin{subfigure}[b]{0.24\textwidth}
\centering\small
\begin{tabular}{c|ccccc|c}
\hline
$i$ & $0$ & $1$ & $2$ & $3$ & $4$ & cost\\\hline
$1$ & $a$ & $b$ & $c$ & $b$ & $e$ & $4$\\
$2$ & $b$ & $f$ & $b$ & $f$ & $b$ & $4$\\
$3$ & $d$ & $d$ & $d$ & $d$ & $d$ & $0$\\\hline
\end{tabular}
\caption{Input plan $M$.}
\label{fig:teaser-input}
\end{subfigure}
\hfill
\begin{subfigure}[b]{0.26\textwidth}
\centering
\begin{tikzpicture}[box/.style={draw,rounded corners=2pt,minimum width=6.5mm,minimum height=6.5mm,inner sep=1pt,font=\small}]
  \node[box] (b1) at (0,0)    {$b$};
  \node[box] (c1) at (0.75,0) {$c$};
  \node[box] (b2) at (1.5,0)  {$b$};
  \draw[->] (b1)--(c1);\draw[->] (c1)--(b2);
  \node[anchor=west,font=\footnotesize] at (1.95,0) {$2$ moves};
  \draw[->] (0.75,-0.4)--(0.75,-0.62);
  \node[box,fill=teal!12,draw=teal] (b3) at (0,-1)    {$b$};
  \node[box,fill=teal!12,draw=teal] (b4) at (0.75,-1) {$b$};
  \node[box,fill=teal!12,draw=teal] (b5) at (1.5,-1)  {$b$};
  \node[anchor=west,font=\footnotesize] at (1.95,-1) {$0$ moves};
\end{tikzpicture}
\caption{\Collapse\ operator.}
\label{fig:teaser-op}
\end{subfigure}
\hfill
\begin{subfigure}[b]{0.24\textwidth}
\centering\small
\begin{tabular}{c|ccccc|c}
\hline
$i$ & $0$ & $1$ & $2$ & $3$ & $4$ & cost\\\hline
$1$ & $a$ & \textcolor{teal}{$b$} & \textcolor{teal}{$b$} & \textcolor{teal}{$b$} & $e$ & $2$\\
$2$ & $b$ & \textcolor{teal}{$f$} & \textcolor{teal}{$f$} & \textcolor{teal}{$f$} & $b$ & $2$\\
$3$ & $d$ & $d$ & $d$ & $d$ & $d$ & $0$\\\hline
\end{tabular}
\caption{Optimum plan $\Pi^\star$.}
\label{fig:teaser-output}
\end{subfigure}
\caption{
    \MAPFC\ problem.
    (a,b) Input graph $G$ and plan $M$ of cost $8$.
    The problem calls for applying the \Collapse\ operator (c) which replaces a closed walk that returns to a vertex, here $b\to c\to b$, by waiting at $b$. 
    (d)~Applying collapses yields the optimum $\Pi^\star$ of cost $4$; \textcolor{teal}{teal} cells are moves turned into waits. 
    Noteworthy is that agents $1$ and $2$ contend for $b$ and form one coupled component solved jointly, whereas agent $3$ never interacts which is the structure our framework exploits.}
\label{fig:teaser}
\end{figure*}

In Multi-Agent Path Finding (\MAPF)~\citep{Stern2019Definitions,Salzman2020Challenges}, we are tasked with coordinating the motion of a fleet of agents. It has applications in automated warehouses, robotic fulfillment, and large-scale logistics fleets~\citep{Wurman2008Kiva,Sturtevant2012MovingAI,Skrynnik2024POGEMA}. Modern \MAPF\ solvers, whether search-based~\citep{Okumura2023LaCAM} or learned~\citep{Sartoretti2019PRIMAL,Wang2023SCRIMP,Skrynnik2024Follower,Andreychuk2025MAPFGPT}, are tuned to produce \emph{feasible} schedules quickly. The trajectories they emit routinely contain moves that could be replaced by waiting in place without violating feasibility, inflating energy and battery consumption with no throughput benefit. \citet{Tang2026Judgelight} recently formalized the corresponding post-optimization task as \MAPFC (see Fig.~\ref{fig:teaser} and Sec.~\ref{sec:formulation}): given a feasible plan $M$, find a feasible modification of $M$ minimizing total moves by \myemph{collapsing} an agent's closed subwalk into a sequence of waits at its endpoint.

Unfortunately, \MAPFC\ is \NP-hard~\citep[Thm.~1]{Tang2026Judgelight}. To solve \MAPFC,
Tang et al. presented \Judgelight (Sec.~\ref{sec:judgelight}), which formulates the problem as an Integer Linear Program (ILP) over all agents jointly.
Our key insight, motivating this work, is that most agents in typical \MAPFC\ instances never occupy the same position at the same time and can therefore be post-optimized in complete isolation. Decomposing \MAPF\ into independent groups is a classical idea~\citep{Standley2010ID,Wagner2015Mstar,Sharon2012MACBS}; what \MAPFC\ adds is enough structure to make this split \myemph{exact} and compute it up front; to the best of our knowledge, this is the first exact, up-front decomposition for a \MAPF\ post-optimization problem.

Concretely, we propose \MAPFCompress (Sec.~\ref{sec:method}), a two-stage framework which starts by partitioning the agents into groups according to whether they could possibly occupy the same position at the same time under any sequence of collapses. 
Each isolated agent's problem then reduces to the simple question of which of its moves can safely be turned into waits, which we answer optimally and in linear time (Sec.~\ref{sec:single-agent}). Only the small residue of agents that actually need to coordinate requires a joint solver. 
When agents need to be coordinated, any \MAPFC\ planner such as \Judgelight can be used. However, a structural property of \MAPFC\ makes the \CBS\ framework a natural fit: the optimization stage is only required to reason about vertex collisions (and not edge collisions as in the general \MAPF\ problem). Consequently, we suggest (Sec.~\ref{sec:improvements}) Collapse-\CBS\ (\CCBS),\footnote{Not to be confused with \algname{CCBS}, Continuous-time \CBS~\citep{Andreychuk2022CCBS}.} a \CBS-like solver that lets us harness the state-of-the-art algorithmic tools developed for \CBS.
We then continue to evaluate our algorithmic framework (Sec.~\ref{sec:experiments}) and conclude with a discussion regarding future work (Sec.~\ref{sec:future-work}). 

Our evaluation tests the framework with both \CCBS\ and \Judgelight\ as the joint solver, and uncovers a phase transition: \CCBS\ is faster by an order of magnitude on small components, while using \Judgelight\ within the framework becomes beneficial only on large-problem instances. This phase transition prescribes a \myemph{regime-aware} planner: each non-trivial component is dispatched according to the regime size, \CCBS\ (with a \Judgelight\ fallback) on the small-to-medium majority, and \Judgelight\  on the largest cores. The resulting planner is exact on every component \CCBS\ solves, never returns a costlier plan than \Judgelight, and often a cheaper one, at a $10.5\times$ median per-instance speedup. On the coordination-light majority it is near-instant; and only on a fraction of the densest instances does it fall back to \Judgelight.

\ignore{
\paragraph{Contributions.} Our contributions include (i)~the \MAPFCompress\ framework, comprising an exact-decomposition theorem (Thm.~\ref{thm:exact-decomposition}) and a near-linear-time decomposition pipeline (Alg.~\ref{alg:naive}) that factorize \MAPFC\ across the connected components of an interaction graph $H$, a linear-time per-agent primitive (Lem.~\ref{lem:smart-sweep}, Cor.~\ref{cor:single-agent-optimality2}) for singleton dispatch, and an exact \CBS-style joint solver \CCBS\ (Alg.~\ref{alg:ccbs}) for the non-trivial residue; and (ii)~an empirical evaluation on POGEMA-derived data showing that the hybrid configuration \MAPFCompress\,+\,\CCBS\,$\to$\,\Judgelight\ solves all $3{,}296$ instances ($3{,}296/3{,}296$) with $599/3{,}296$ ($18\%$) strict cost wins over \Judgelight, zero cost regressions, and a $10.5\times$ median per-instance speedup.
}

\section{Related Work}
\label{sec:related}

Our work sits at the intersection of three lines of research: learning-based and lifelong \MAPF, post-optimization of \MAPF\ schedules, and heuristic search for \MAPF, which includes both the conflict-based solvers we adapt and the agent-decomposition methods closest to our approach.

\subsection{Learning-Based and Lifelong \MAPF}
Learning-based \MAPF\ solvers, such as \algname{PRIMAL}~\citep{Sartoretti2019PRIMAL}, \algname{SCRIMP}~\citep{Wang2023SCRIMP}, \algname{Follower}~\citep{Skrynnik2024Follower}, \algname{MAPF-GPT}~\citep{Andreychuk2025MAPFGPT}
and
\algname{RAILGUN}~\citep{Tang2025RAILGUN}, train decentralized policies that scale to fleets and horizons beyond the reach of optimal search, producing feasible schedules in milliseconds. This speed comes at a price: with no optimality mechanism, the emitted plans contain redundant moves such as back-and-forth oscillations, reflected in the cost gaps these solvers exhibit relative to search-based baselines on the \algname{POGEMA} benchmark~\citep{Skrynnik2024POGEMA}. The same holds for fast rule-based solvers such as \algname{PIBT}~\citep{Okumura2022PIBT}, whose plans often contain similar oscillations. Moreover, these policies increasingly target lifelong settings~\citep{Skrynnik2024Follower}, where plans are continually re-issued and per-plan waste compounds across the replanning loop.

\subsection{Post-Optimization of \MAPF\ Schedules}
Post-processing of \MAPF\ plans has been studied along several axes. Large-neighborhood search~\citep{Li2021MAPFLNS} and iterative refinement~\citep{Okumura2021Refinement} destroy and repair selected agent trajectories to improve cost while preserving feasibility, and delay-introduction~\citep{Kottinger2024Delays} adds controlled waits to account for  execution-time failures rather than to reduce cost.

A related execution-level line of work re-times a fixed spatial plan, for kinematic feasibility~\citep{Hoenig2016POST}, robust execution under delays~\citep{Hoenig2019ADG,Ma2017MCP}, or built-in slack~\citep{Atzmon2018Robust}. Similarly, \algname{APEX-MR}~\citep{Huang2025APEXMR} post-processes a sequential multi-robot task plan into an asynchronous execution plan that absorbs delays and contingencies. All these methods operate on the temporal (execution) layer while  \MAPFC\ operates on the path (movement) layer: it preserves the timeline and shortens the spatial trace, turning moves into waits to reduce cost.
Tailored specifically to \MAPFC\ is the recent ILP-based \Judgelight\ of \citet{Tang2026Judgelight}, which we recap in Sec.~\ref{sec:judgelight}.

\subsection{Heuristic Search for \MAPF}
Heuristic search has been the algorithmic infrastructure used to develop state-of-the-art algorithms for \MAPF, with Conflict-Based Search (\CBS)~\citep{Sharon2015CBS} being one of the prominent examples. 
\CBS\ computes optimal solutions by exploring a high-level tree whose nodes carry per-agent constraint sets and dispatching the per-agent search to a low-level solver, resolving each vertex conflict in the resulting joint plan by branching. Its running time has been improved by (i)~relaxing optimality guarantees to bounded suboptimality~\citep{Barer2014ECBS,Li2021EECBS} or by (ii)~introducing optimizations to the basic version of \CBS\ such as cardinal-conflict prioritization and conflict bypass~\citep{Boyarski2015ICBS}, disjoint splitting~\citep{Li2019DisjointSplitting}, and symmetry breaking~\citep{Li2019Symmetry}.

A common tool used in the development of heuristic search algorithms for \MAPF\ is reasoning about which agents actually interact. 
Independence detection~\citep{Standley2010ID} partitions the agents into groups, plans each group in isolation, and merges two groups only once a conflict between them is detected, re-planning the merged group; 
\algname{M*}~\citep{Wagner2015Mstar} couples agents lazily by inflating the search dimension only around realized collisions; 
and meta-agent \CBS~\citep{Sharon2012MACBS} merges agents into a meta-agent inside the \CBS\ tree once their conflict count crosses a threshold. 
The common thread is that the coupling structure is discovered \emph{lazily}, during search, from conflicts that actually arise.
The closest line of research to ours is this family of decomposition methods, from which we differ in that (i)~we reason about agent interaction up front, before running the planner, rather than lazily during search; and (ii)~our interaction reasoning is tailored to \MAPFC, whose reach sets make the split exact.

\fi
\section{Preliminaries and Problem Formulation}
\label{sec:formulation}

Let $G=(V,E)$ be an undirected graph. A \myemph{single-agent plan} is a sequence $\pi=\langle v_0,v_1,\ldots,v_k\rangle$ of vertices in $V$ such that, for every $0\le t<k$, either $v_t=v_{t{+}1}$ (the agent \myemph{waits} at $v_t$) or $(v_t,v_{t{+}1})\in E$ (the agent \myemph{moves} along an edge). We write $|\pi|:=k$ for the \myemph{length} of $\pi$ and $\pi(t):=v_t$ for the agent's position at time $t\in\{0,\ldots,k\}$. The \myemph{cost} of $\pi$ is its move count,
$$
\cost(\pi) \;:=\; \bigl|\bigl\{\,t\in\{0,\ldots,k-1\}\,\big\vert\,\pi(t)\neq\pi(t{+}1)\,\bigr\}\bigr|.
$$

Two single-agent plans $\pi^i,\pi^j$ are in \myemph{vertex collision} at time $t$ if $\pi^i(t)=\pi^j(t)$, and in \myemph{edge collision} at time $t$ if $\pi^i(t)\ne\pi^i(t{+}1)$, $\pi^i(t)=\pi^j(t{+}1)$, and $\pi^i(t{+}1)=\pi^j(t)$. The pair is \myemph{collision-free} if no such $t$ exists.

For agents $I=\{1,\ldots,N\}$, a \myemph{joint plan} is a tuple $\Pi=(\pi^1,\ldots,\pi^N)$ of single-agent plans. Its \myemph{end-time} is $T(\Pi):=\max_{i\in I}|\pi^i|$; each agent's position is extended by waits at its endpoint, namely with $v_{i,\text{end}}:=\pi^i(|\pi^i|)$ we set $\pi^i(t):=v_{i,\text{end}}$ for $|\pi^i|<t\le T(\Pi)$. The joint plan is \myemph{collision-free} if every pair $(\pi^i,\pi^j)$, $i\neq j$, is collision-free over $\{0,\ldots,T(\Pi)\}$. Its \myemph{cost} is $\cost(\Pi):=\sum_{i\in I}\cost(\pi^i)$.

Given a single-agent plan $\pi=\langle v_0,\ldots,v_k\rangle$ and indices $0\le a<b\le k$ with $v_a=v_b$, the \myemph{collapse} operation $\Collapse(\pi,a,b)$ returns the plan $\pi'$ of length~$k$ defined by $\pi'(t):=v_a$ for every $t\in[a,b]$ and $\pi'(t):=v_t$ otherwise. We call the common endpoint vertex $v_a=v_b$ the \myemph{anchor} of $\Collapse(\pi,a,b)$.
As the operation replaces a sequence of moves by waits we have that $\cost(\Collapse(\pi,a,b))\le\cost(\pi)$.

\begin{example}\label{ex:single-collapse}
Consider the graph $G$ of Fig.~\ref{fig:teaser}, and consider the single-agent plan $\pi=\langle a,b,c,b,e\rangle$ on $G$. This plan has length~$4$ and $\cost(\pi)=4$: every consecutive pair differs. The only repeated vertex is $b$, occurring at $t\in\{1,3\}$, so the only admissible $\Collapse$ application is $\Collapse(\pi,1,3)$, which yields $\pi^\star=\langle a,b,b,b,e\rangle$ with $\cost(\pi^\star)=2$; this is the minimum-cost plan obtainable from $\pi$ by a sequence of $\Collapse$ applications.
\end{example}

The main problem of this paper extends this single-agent setting to multiple agents under joint-feasibility constraints.

\begin{problem}[\MAPFC]\label{prob:mapfc}
Given a graph $G$ and a collision-free joint plan $M=(M^i)_{i\in I}$ on $G$ for agents~$I$,\footnote{We reserve $M$ for the input joint plan and $\Pi$ for any joint plan under construction or returned by the algorithm; both are joint plans in the sense defined above. The distinction lets every occurrence of $M^i$ refer unambiguously to the input row of agent~$i$ and every $\pi^i$ to a candidate single-agent plan derived from it.} find a collision-free joint plan $\Pi^\star=(\pi^{\star i})_{i\in I}$ in which each~$\pi^{\star i}$ is obtained from $M^i$ by a finite sequence of $\Collapse$ applications, that minimizes $\cost(\Pi^\star)$.
\end{problem}

For a subset of agents $S\subseteq I$, we write $\OPT(S)$ for the cost of an optimal solution to \MAPFC\ restricted to $S$ with the corresponding rows of $M$. We further write~$k_i:=|M^i|$ for the length of agent $i$'s input plan.

\begin{assumption}[common makespan]\label{asm:common-makespan}
The input rows share a common makespan: $|M^i|=T(M)$ for all $i\in I$, equivalently $k_i=T(M)$.
\end{assumption}
This is the standard form of a \MAPF\ solution, in which each agent waits at its goal until the last agent arrives.

\begin{example}\label{ex:setup}
Consider the graph $G$ and the input plan $M$ of Fig.~\ref{fig:teaser}. The input is collision-free (no shared cell, no swap) and $\cost(M)=4+4+0=8$. Note that the per-agent optimum on $M^1$ in isolation (Ex.~\ref{ex:single-collapse}) keeps agent~$1$ waiting at~$b$ during $[1,3]$, while $M^2$ also visits $b$ at $t=2$; whether the single-agent optima are jointly feasible is precisely the coupling question that distinguishes \MAPFC\ from its single-agent restriction.
\end{example}

\section{Algorithmic Background: \Judgelight}
\label{sec:judgelight}

\citet{Tang2026Judgelight} solve \MAPFC\ using \Judgelight, which casts the problem as an ILP after two preprocessing reductions.

\Judgelight enumerates every candidate collapse action by introducing a binary variable $y_c\in\{0,1\}$ for every quadruple $c=(i,a,b,v)$ with $M^i(a)=M^i(b)=v$, where $y_c=1$ indicates that the collapse $(a,b)$ is applied to $M^i$. Each variable carries a non-negative weight $w_c$ equal to the number of moves of $M^i$ inside $[a,b]$, the saving the collapse realizes in isolation. The ILP maximizes $\sum_c w_c\,y_c$ subject to three constraint families: \myemph{per-agent exclusion} (two selected actions of one agent must have time-disjoint intervals), \myemph{vertex-collision exclusion} (no two selected actions place two agents at the same cell at the same time), and \myemph{dependency constraints} (if a selected action keeps one agent at a cell that another agent originally occupies, that other agent must collapse around the cell). The dependency constraints are the inter-agent coupling that makes the problem hard. In contrast, our \CCBS\ discovers this coupling lazily within each component, one resolved conflict at a time (Sec.~\ref{sec:improvements}).

Two reductions run before the ILP is built. \myemph{Safe-oscillation removal} greedily eliminates every oscillation an agent can undo on its own, without affecting any other agent; because the choice is greedy, it can commit an agent to waits that preclude a larger joint saving. A \myemph{maximal-loop reduction} losslessly prunes redundant collapse candidates, shrinking the variable set. Even after this reduction, an alternating segment of length $n$ still yields $O(n^2)$ variables and $O(n^4)$ dependency constraints, so building the ILP is worst-case quartic in an agent's plan length.

\section{Decomposing \MAPFC by Interaction Components}
\label{sec:method}

In this section we present our approach to solving \MAPFC\ (Prob.~\ref{prob:mapfc}). We begin (Lem.~\ref{lem:no-edge-conflicts}) with a structural property of joint \Collapse\ applied to a collision-free input: no edge collision can ever arise, so the only feasibility constraint we need to enforce is vertex-collision avoidance. We then introduce the \myemph{reach set} of an agent, namely the (vertex, time) cells it can occupy under any plan derived from $M^i$, and use pairwise reach-set intersections to define an \myemph{interaction graph} on the agents whose edges record which pairs can potentially conflict. These two ingredients let us decompose any \MAPFC\ instance into independent sub-instances, one per connected component of the interaction graph (Sec.~\ref{subsec:pipeline}): singletons are dispatched to a per-agent primitive \oneMAPFC (Sec.~\ref{sec:single-agent}), that returns the per-agent optimum $\OPT(\{i\})$ in time linear in~$|M^i|$; non-trivial components are handed to a joint solver of choice~(Sec.~\ref{sec:improvements}). 

\begin{lemma}[no edge conflicts]\label{lem:no-edge-conflicts}
Let $M$ be a collision-free joint plan and $\Pi$ a joint plan in which each $\pi^i$ is obtained from $M^i$ by a finite sequence of $\Collapse$ applications. Then $\Pi$ contains no edge collision.
\end{lemma}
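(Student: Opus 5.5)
The plan is to show that every \Collapse-derived plan keeps a subset of the original moves, each at its original time. An edge collision in $\Pi$ would then be an edge collision already present in $M$.

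\textbf{Step 1 (invariant).} For a single-agent plan $\pi$ derived from $M^i$, I will prove by induction on the number of \Collapse\ applications that:
\begin{itemize}
\item $|\pi|=|M^i|$;
\item for every $t$, either $\pi(t)=\pi(t{+}1)$, or $\pi(t)=M^i(t)$ and $\pi(t{+}1)=M^i(t{+}1)$.
\end{itemize}
The base case $\pi=M^i$ is trivial.

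For the inductive step, let $\pi'=\Collapse(\pi,a,b)$, where $\pi$ satisfies the invariant. Length is preserved by definition. I then check each time $t$:
\begin{itemize}
\item For $t\in[a,b{-}1]$, both $\pi'(t)$ and $\pi'(t{+}1)$ equal $v_a$, so $t$ is a wait.
\item For $t=a{-}1$, we have $\pi'(a{-}1)=\pi(a{-}1)$ and $\pi'(a)=v_a=\pi(a)$, so the step is unchanged from $\pi$.
\item For $t=b$, we have $\pi'(b)=v_a=v_b=\pi(b)$ and $\pi'(b{+}1)=\pi(b{+}1)$, so the step is again unchanged.
\item All other times are untouched.
\end{itemize}
Hence every move of $\pi'$ is a move of $\pi$ at the same time, and the invariant carries over. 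The boundary steps $t=a{-}1$ and $t=b$ are the only place where the condition $v_a=v_b$ is used, and this is the step I expect to need the most care.

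\textbf{Step 2 (padding).} Since $|\pi^i|=|M^i|$ for all $i$, we get $T(\Pi)=T(M)$. The padding beyond $|\pi^i|$ equals $M^i$'s own padding and consists only of waits. So the invariant holds over the whole horizon $\{0,\ldots,T(\Pi)\}$; under Assumption~\ref{asm:common-makespan} there is no padding at all.

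\textbf{Step 3 (contradiction).} Suppose $\pi^i,\pi^j$ have an edge collision at time $t$. Then $\pi^i(t)\ne\pi^i(t{+}1)$, so by Step~1 $\pi^i(t)=M^i(t)$ and $\pi^i(t{+}1)=M^i(t{+}1)$.

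Moreover, $\pi^j(t)=\pi^i(t{+}1)\ne\pi^i(t)=\pi^j(t{+}1)$, so agent $j$ also moves at time $t$. Step~1 then gives $\pi^j(t)=M^j(t)$ and $\pi^j(t{+}1)=M^j(t{+}1)$.

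Substituting, $M^i(t)\ne M^i(t{+}1)$, $M^i(t)=M^j(t{+}1)$ and $M^i(t{+}1)=M^j(t)$. This is an edge collision in $M$, contradicting the assumption that $M$ is collision-free.
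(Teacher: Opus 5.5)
Your proof is correct and follows essentially the same route as the paper's: the paper likewise argues by induction that each $\Collapse$ only turns moves into waits, so the move-transitions $(t,\pi^i(t),\pi^i(t{+}1))$ of each $\pi^i$ form a subset of those of $M^i$, and an edge collision in $\Pi$ would therefore already be an edge collision in $M$. Your version makes explicit what the paper leaves implicit: the boundary steps $t=a{-}1$ and $t=b$, the padding beyond $|\pi^i|$, and the observation that agent $j$ must also be moving at time $t$.
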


\begin{proof}
We call a triple $(t,\pi(t),\pi(t{+}1))$ with $\pi(t)\ne\pi(t{+}1)$ a \myemph{move-transition} of $\pi$. A single $\Collapse$ turns the moves at timesteps inside its interval into waits and leaves all other timesteps unchanged, so it can only shrink a plan's move-transition set; by induction, the move-transitions of each $\pi^i$ are a subset of those of $M^i$. An edge collision between agents $i$ and~$j$ at time $t$ consists of a move-transition $(t,u,v)$ of agent~$i$ and the opposite move-transition $(t,v,u)$ of agent~$j$; these would be move-transitions of $M^i$ and~$M^j$ as well, so $M$ would contain the same edge collision, contradicting that $M$ is collision-free.
\end{proof}

Throughout the remainder of the paper, ``conflict'' therefore always means a vertex collision.

\subsection{Reach Sets and the Interaction Graph}
\label{subsec:reach-and-H}
Recall that~$T(M):=\max_i|M^i|$ denotes the \myemph{end-time} of the input joint plan (Sec.~\ref{sec:formulation}). A \myemph{vertex-time cell} is a pair $(v,t)\in V\times\{0,\ldots,T(M)\}$. For an agent $i$ and a vertex $v\in V$ visited by $M^i$, let
$$
\begin{aligned}
\first_i(v) &\;:=\;\min\{t:M^i(t)=v\}, \\
\last_i(v)  &\;:=\;\max\{t:M^i(t)=v\}.
\end{aligned}
$$

\begin{figure}[t]
\centering
\begin{tikzpicture}[
  cell/.style={draw, minimum size=8mm, inner sep=0pt, font=\small},
  hl/.style={draw, fill=gray!25, minimum size=8mm, inner sep=0pt, font=\small},
  x=8mm, y=8mm
]
\foreach \t in {0,1,2,3,4} {
  \node[font=\small] at (\t, 0.65) {$t{=}\t$};
}
\foreach \r/\lab in {0/a, -1/b, -2/c, -3/d, -4/e, -5/f} {
  \node[font=\small] at (-0.75, \r) {$\lab$};
}
\node[cell] at (0, 0)  {$1$};
\node[cell] at (1, 0)  {$\,$};
\node[cell] at (2, 0)  {$\,$};
\node[cell] at (3, 0)  {$\,$};
\node[cell] at (4, 0)  {$\,$};
\node[cell] at (0, -1) {$2$};
\node[hl]   at (1, -1) {$1{,}2$};
\node[hl]   at (2, -1) {$1{,}2$};
\node[hl]   at (3, -1) {$1{,}2$};
\node[cell] at (4, -1) {$2$};
\node[cell] at (0, -2) {$\,$};
\node[cell] at (1, -2) {$\,$};
\node[cell] at (2, -2) {$1$};
\node[cell] at (3, -2) {$\,$};
\node[cell] at (4, -2) {$\,$};
\foreach \t in {0,1,2,3,4} {
  \node[cell] at (\t, -3) {$3$};
}
\node[cell] at (0, -4) {$\,$};
\node[cell] at (1, -4) {$\,$};
\node[cell] at (2, -4) {$\,$};
\node[cell] at (3, -4) {$\,$};
\node[cell] at (4, -4) {$1$};
\node[cell] at (0, -5) {$\,$};
\node[cell] at (1, -5) {$2$};
\node[cell] at (2, -5) {$2$};
\node[cell] at (3, -5) {$2$};
\node[cell] at (4, -5) {$\,$};
\end{tikzpicture}
\caption{Reach-set for the running example. Each cell~$(v,t)$ is labeled with the agents whose reach set $\Reach^i$ includes it; an empty cell is in no reach set. The shaded cells
form the intersection $\Reach^1\cap\Reach^2$ that witnesses the interaction edge $\{1,2\}$ of $H$ (Fig.~\ref{fig:interaction-graph}). Agent~$3$'s column (row~$d$) is disjoint from the others, so $\{3\}$ is a singleton component.}
\label{fig:reach-grid}
\end{figure}

An \myemph{anchor} of $M^i$ is a vertex $v$ with $\first_i(v)<\last_i(v)$, namely a vertex visited at least twice. We define the \myemph{original cells}~$\Reach^i_{\text{orig}}$ traced by~$M^i$ and the \myemph{anchor cells}~$\Reach^i_{\text{anchor}}$ visited when $i$ collapses as follows:
$$
\begin{aligned}
\Reach^i_{\text{orig}} &\;=\;\bigl\{(M^i(t),t)\,\big\vert\,0\le t\le T(M)\bigr\}, \\
\Reach^i_{\text{anchor}} &\;=\;\bigcup_{v\,:\,\first_i(v)<\last_i(v)}\{v\}\times\bigl[\first_i(v),\last_i(v)\bigr].
\end{aligned}
$$
The \myemph{reach set} of agent $i$ is their union:
$$
\Reach^i\;=\;\Reach^i_{\text{orig}}\,\cup\,\Reach^i_{\text{anchor}}.
$$
Roughly speaking, $\Reach^i$ is the set of all cells agent $i$ can occupy under some single-agent plan derived from $M^i$, which we now make precise.

\begin{observation}[reach containment]\label{obs:reach-containment}
Every $\Collapse$-derived plan $\pi^i$ of $M^i$ satisfies $(\pi^i(t),t)\in\Reach^i$ for every $0\le t\le T(M)$. Conversely, every cell of $\Reach^i$ is occupied by some $\Collapse$-derived plan of~$M^i$.
\end{observation}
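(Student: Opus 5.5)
We prove the two directions separately, starting with containment, which goes by induction on the number of $\Collapse$ applications. To make the induction go through we strengthen the hypothesis. We claim that for every derived plan $\pi^i$ and every $t$, either $\pi^i(t)=M^i(t)$, or $\pi^i(t)=v$ for some anchor $v$ of $M^i$ with $\first_i(v)\le t\le\last_i(v)$. Both alternatives place $(\pi^i(t),t)$ in $\Reach^i$. The base case $\pi^i=M^i$ is immediate.

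For the inductive step, let $\pi'=\Collapse(\pi,a,b)$, where $\pi$ satisfies the invariant and $u:=\pi(a)=\pi(b)$. Times outside $[a,b]$ are unchanged. For $t\in[a,b]$ we have $\pi'(t)=u$, and we must show $(u,t)\in\Reach^i$. This is the main obstacle, since $u$ itself was produced by earlier collapses.

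We apply the invariant at times $a$ and $b$. At each of these times, $u$ equals either the original vertex $M^i(\cdot)$ or an anchor whose window $[\first_i(u),\last_i(u)]$ contains that time. In every combination $u$ is visited by $M^i$ at some time $\le a$: either $M^i(a)=u$, or $\first_i(u)\le a$. Symmetrically, $u$ is visited at some time $\ge b$. Hence $\first_i(u)\le a$ and $\last_i(u)\ge b$.

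If $\first_i(u)<\last_i(u)$, then $u$ is an anchor whose window covers $[a,b]$, and the invariant holds for $\pi'$. Otherwise $u$ is visited exactly once, at some time $s$ with $s\le a<b\le s$, which is impossible. This closes the induction. Reach containment follows, including for padded times, since Assumption~\ref{asm:common-makespan} puts all times in $[0,T(M)]$.

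For the converse, take a cell $(v,t)\in\Reach^i$. If the cell is original, $M^i$ itself (zero collapses) occupies it. If it is an anchor cell, with $v$ an anchor and $t\in[\first_i(v),\last_i(v)]$, then the single application $\Collapse(M^i,\first_i(v),\last_i(v))$ is admissible because both endpoints equal $v$ and $\first_i(v)<\last_i(v)$. The resulting plan sits at $v$ at time $t$.

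The key step is therefore the strengthened invariant. Without it, a collapse anchored at a vertex that appears only because of earlier collapses might seem to escape $\Reach^i$. The first/last bound rules this out, because every value of a derived plan is witnessed in $M^i$ before time $a$ and after time $b$.
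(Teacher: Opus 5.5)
Your proposal is correct and matches the paper's proof. You use the same strengthened invariant (trajectory vertex, or an anchor whose window $[\first_i(w),\last_i(w)]$ contains $t$) and apply it at times $a$ and $b$ to get $\first_i(u)\le a<b\le\last_i(u)$. For the converse you use the same single-collapse witness, $\Collapse(M^i,\first_i(v),\last_i(v))$.
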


\begin{proof}
We prove the first claim by induction on the number of $\Collapse$ applications, with the invariant that at every time~$t$ either $\pi^i(t)=M^i(t)$ or $\pi^i(t)=w$ for an anchor~$w$ of~$M^i$ with $\first_i(w)\le t\le\last_i(w)$; the invariant places $(\pi^i(t),t)$ in $\Reach^i_{\text{orig}}$ or in $\Reach^i_{\text{anchor}}$, respectively. The base case is $M^i$ itself. For the inductive step, let $\pi'=\Collapse(\pi,a,b)$ with anchor $v=\pi(a)=\pi(b)$, where $\pi$ satisfies the invariant. Applying the invariant at time~$a$ gives $\first_i(v)\le a$, both when $v=M^i(a)$, as then $M^i$ visits~$v$ at time~$a$, and when $v$ is an anchor of~$M^i$; symmetrically, at time~$b$ it gives $b\le\last_i(v)$. Hence $\first_i(v)\le a<b\le\last_i(v)$, so~$v$ is an anchor of~$M^i$ and $[a,b]\subseteq[\first_i(v),\last_i(v)]$. Every cell $\pi'$ newly occupies therefore lies in $\Reach^i_{\text{anchor}}$, and outside $[a,b]$ the plan is unchanged, so $\pi'$ satisfies the invariant. For the converse, $M^i$ occupies every cell of $\Reach^i_{\text{orig}}$, and for an anchor~$v$ the single application $\Collapse(M^i,\first_i(v),\last_i(v))$ occupies $(v,t)$ for every $t\in[\first_i(v),\last_i(v)]$.
\end{proof}

Noteworthy is that the anchor of a $\Collapse$ application on an already-collapsed plan is itself an anchor of the input row~$M^i$, which is why the reach set does not grow as applications compose.
See Fig.~\ref{fig:reach-grid} for a visualization of the reach sets of the running example.

\begin{example}\label{ex:reach}
Take agent~$1$ of the running example, with input plan $M^1=\langle a,b,c,b,e\rangle$ (Ex.~\ref{ex:single-collapse}). Its original cells are the five it traces, $\Reach^1_{\text{orig}}=\{(a,0),(b,1),(c,2),(b,3),(e,4)\}$. Its only anchor is $b$ resulting in $\Reach^1_{\text{anchor}}=\{b\}\times[1,3]=\{(b,1),(b,2),(b,3)\}$. The reach set $\Reach^1$ is their union, the cells marked~$1$ in Fig.~\ref{fig:reach-grid}.
\end{example}

The \myemph{interaction graph}~$H=(I,\mathcal E)$ is the undirected graph on agents defined by
$$
\{i,j\}\in\mathcal E\iff\Reach^i\cap\Reach^j\ne\emptyset.
$$
For any distinct $i,j$, $\Reach^i_{\text{orig}}\cap\Reach^j_{\text{orig}}=\emptyset$, since these are exactly the cells of the collision-free input joint plan $M$; hence every edge of $H$ is witnessed by at least one anchor cell. A component $I_\ell$ of $H$ with $|I_\ell|=1$ is a \myemph{singleton}; one with $|I_\ell|\ge 2$ is \myemph{non-trivial}.

\begin{figure}[t]
\centering
\begin{tikzpicture}[
  vert/.style={circle, draw, minimum size=8mm, inner sep=0pt}
]
\node[vert] (a1) at (0, 0)   {$1$};
\node[vert] (a2) at (1.5, 0) {$2$};
\node[vert] (a3) at (3, 0)   {$3$};
\draw (a1) -- (a2);
\end{tikzpicture}
\caption{The interaction graph $H$ for the running example. Component $\{1,2\}$ is non-trivial (witnessed by $(b,1),(b,2),(b,3)\in\Reach^1\cap\Reach^2$, the shaded cells of Fig.~\ref{fig:reach-grid}); agent~$3$ is a singleton.}
\label{fig:interaction-graph}
\end{figure}

\subsection{Exact Decomposition and the Pipeline}
\label{subsec:pipeline}

Let $I_1,\ldots,I_m$ be the connected components of $H$. The next theorem shows that the reach sets and the interaction graph suffice to factorize the problem across these components.

\begin{theorem}[exact decomposition]\label{thm:exact-decomposition}
Combining optimal plans for the individual components yields a globally optimal plan. In particular, $\OPT(I)=\sum_{\ell=1}^{m}\OPT(I_\ell)$.
\end{theorem}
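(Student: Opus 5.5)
The plan is to prove the two inequalities $\OPT(I)\ge\sum_\ell\OPT(I_\ell)$ and $\OPT(I)\le\sum_\ell\OPT(I_\ell)$ separately. The second one will be witnessed by an explicit combined plan, which also gives the first sentence of the theorem.

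\emph{Lower bound (restriction).} Let $\Pi^\star$ be an optimal solution for $I$. For each component $I_\ell$, consider the restriction $(\pi^{\star i})_{i\in I_\ell}$. Each row is still $\Collapse$-derived from $M^i$, and any pair of rows in it is collision-free, because it was already collision-free in $\Pi^\star$.

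One point needs care: the time horizon. Under Assumption~\ref{asm:common-makespan}, and since $\Collapse$ preserves length, every row has length $T(M)$. So the end-time of the sub-plan equals that of the full plan, and no extra end-of-plan waits are introduced. Hence the restriction is feasible for the sub-instance, which gives $\OPT(I_\ell)\le\sum_{i\in I_\ell}\cost(\pi^{\star i})$. Summing over $\ell$ yields $\sum_\ell\OPT(I_\ell)\le\OPT(I)$.

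\emph{Upper bound (combination).} For each $\ell$, take an optimal solution $\Pi_\ell$ of the sub-instance $I_\ell$, and let $\Pi$ be their union over all agents. Each row of $\Pi$ is $\Collapse$-derived and, again, has common length $T(M)$. The cost of $\Pi$ is $\sum_\ell\OPT(I_\ell)$. It remains to show that $\Pi$ is collision-free. Pairs inside a single component are fine by the feasibility of $\Pi_\ell$. So consider $i\in I_\ell$ and $j\in I_{\ell'}$ with $\ell\ne\ell'$.

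\begin{itemize}
\item \textbf{Vertex collisions.} Suppose $\pi^i(t)=\pi^j(t)=v$. By Observation~\ref{obs:reach-containment}, $(v,t)\in\Reach^i\cap\Reach^j$. Then $\{i,j\}\in\mathcal E$, so $i$ and $j$ lie in the same component, a contradiction.
\item \textbf{Edge collisions.} These are excluded directly by Lemma~\ref{lem:no-edge-conflicts}, applied to the whole joint plan $\Pi$, since $M$ is collision-free and every row of $\Pi$ is $\Collapse$-derived.
\end{itemize}

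Therefore $\Pi$ is feasible, and $\OPT(I)\le\cost(\Pi)=\sum_\ell\OPT(I_\ell)$. Combined with the lower bound, $\Pi$ is globally optimal.

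\emph{Main obstacle.} The delicate step is the bookkeeping of time horizons and end-of-plan waits. If rows could have different lengths, the positions after a row's end would be defined by extension relative to $T(\Pi)$, and both the reach-set containment and the restriction argument would need that extended range. Assumption~\ref{asm:common-makespan} together with length preservation under $\Collapse$ removes this issue. I would state this explicitly at the start of the proof, so that Observation~\ref{obs:reach-containment} covers all $t\in\{0,\ldots,T(M)\}$.
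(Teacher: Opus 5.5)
Your proof is correct and follows essentially the paper's argument: Observation~\ref{obs:reach-containment} together with the definition of $H$ rules out cross-component vertex collisions, Lemma~\ref{lem:no-edge-conflicts} rules out edge collisions, and additivity of cost over agents gives $\OPT(I)=\sum_{\ell}\OPT(I_\ell)$. The only difference is presentation: the paper compresses everything into the claim that ``a joint plan is feasible iff its restriction to each component is feasible,'' while you unpack it into explicit restriction and combination inequalities and spell out the time-horizon bookkeeping (which Assumption~\ref{asm:common-makespan} and length preservation under $\Collapse$ make harmless).
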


\begin{proof}
By Lem.~\ref{lem:no-edge-conflicts}, feasibility reduces to absence of vertex collisions. If agents $i$ and $j$ collide at cell $(v,t)$ under some pair of plans, then $(v,t)\in\Reach^i\cap\Reach^j$ by Obs.~\ref{obs:reach-containment}. Thus $\{i,j\}\in\mathcal E$; equivalently, any colliding pair is $H$-adjacent and lies in a single component. Hence cross-component pairs, and non-adjacent same-component pairs, are collision-free under every choice of plans, and a joint plan is feasible iff its restriction to each component is feasible. The cost decomposes additively over agents and, by the factorization above, the components can be optimized independently; the global minimum therefore equals the sum of per-component minima, proving the decomposition.
\end{proof}

Thm.~\ref{thm:exact-decomposition} suggests a natural algorithmic realization, outlined in Alg.~\ref{alg:naive}.
We start by computing the reach set of every agent (Lines~1--4). 
Specifically, for each agent~$i$, a single left-to-right pass over $M^i$ emits the trajectory cells of $\Reach^i_{\text{orig}}$ and records, in a vertex hash $\mathcal H_i$, the first and last visit times $\first_i(v),\last_i(v)$ of each visited vertex~$v$ (Line~3). Every anchor of $M^i$ then contributes its band of anchor cells $\{v\}\times[\first_i(v),\last_i(v)]$. The resulting reach sets are inserted, cell by cell, into a shared cell-to-agents map $\mathcal M$ (Line~4).
We then recover the interaction graph and its components (Lines~5--6). As stated, Line~5 enumerates the edge set $\mathcal E$, which a single cell shared by $c$ agents would inflate by $\binom{c}{2}$ pairs; this is avoided by maintaining a union--find over the agents: whenever an agent lands on a cell of $\mathcal M$ already occupied by another, we union the two. The connected components $I_1,\ldots,I_m$ are read off as the union--find classes (Line~6); the bound of Prop.~\ref{prop:pipeline-complexity} refers to this realization.
Finally, we solve the components independently and stitch the results (Lines~7--14). A singleton $\{i\}$ goes to the per-agent primitive \oneMAPFC\ (Line~10), while a non-trivial component is handed to a joint solver \textsc{JointSolve} as the \myemph{self-contained sub-instance} $(G,M_{I_\ell})$, the restriction of $M$ to the agents in~$I_\ell$ (Line~12). Appending each component's plans (Line~13) gives the joint plan $\Pi$ returned in Line~14; by Thm.~\ref{thm:exact-decomposition}, $\Pi$ is globally optimal whenever every component is solved optimally.

\begin{algorithm}[t]
\caption{\MAPFCompress: decomposition pipeline.}\label{alg:naive}
\begin{algorithmic}[1]
\Require \MAPFC\ instance $(G,M)$
        on agents~$I$.
\Ensure A feasible joint plan $\Pi$.
\State $\mathcal M\gets\emptyset$\Comment{cell $\to$ agents map}
\ForAll{$i\in I$}
  \State Compute $\first_i(v),\last_i(v)$ for each anchor $v$ of $M^i$
  \State $\Reach^i\gets\Reach^i_{\text{orig}}\cup\Reach^i_{\text{anchor}}$; insert its cells into $\mathcal M$
\EndFor
\State $\mathcal E\gets\{\{i,j\}\,\vert\,\exists(v,t)\text{ with }\{i,j\}\subseteq\mathcal M[(v,t)]\}$
\State $\{I_1,\ldots,I_m\}\gets$ connected components of $H{=}(I,\mathcal E)$
\State $\Pi\gets()$
\ForAll{component $I_\ell$}
  \If{$|I_\ell|=1$, say $I_\ell{=}\{i\}$}
    \State $\pi^i\gets\oneMAPFC\!(G,M^i)$\Comment{Sec.~\ref{sec:single-agent}, Cor.~\ref{cor:single-agent-optimality2}}
  \Else
    \State $\Pi_{I_\ell}\gets$\textsc{JointSolve}$(G,M_{I_\ell})$
  \EndIf
  \State Append the per-agent plans to $\Pi$
\EndFor
\State \Return $\Pi$
\end{algorithmic}
\end{algorithm}

\begin{proposition}[decomposition-pipeline complexity]\label{prop:pipeline-complexity}
The singleton-exact part of Alg.~\ref{alg:naive}, namely all of its steps except the \textsc{JointSolve} call on Line~12, runs in time\footnote{$\tilde{O}(\cdot)$ hides the inverse-Ackermann factor $\alpha(N)$ contributed by the union--find (see proof), the sole super-linear factor in the bound.}
$$
\tilde{O}\!\Bigl(\sum_i k_i\;+\;\sum_i|\Reach^i|\Bigr).
$$
\end{proposition}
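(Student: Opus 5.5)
The plan is to bound each block of Alg.~\ref{alg:naive} separately, treating the loop over agents (Lines~1--4), the component computation (Lines~5--6) and the dispatch loop (Lines~7--14, minus Line~12) in turn. Throughout I take the vertex hash $\mathcal H_i$ and the cell map $\mathcal M$ to support insertion and lookup in expected $O(1)$ time. For Line~3, a single left-to-right pass over $M^i$ handles each time $t$. It looks up $M^i(t)$ in $\mathcal H_i$, sets $\first_i$ on the first visit, and overwrites $\last_i$ on every visit. This costs $O(k_i+1)$, and since $k_i=T(M)$ under Assumption~\ref{asm:common-makespan}, the total is $O(\sum_i k_i)$ (we may assume $k_i\ge 1$, or absorb the additive $N$ into the $|\Reach^i|\ge 1$ term). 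Anchors are then read off by scanning $\mathcal H_i$, whose size is at most $k_i+1$.

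For Line~4, I will enumerate the cells of $\Reach^i$ directly from its definition: the $k_i+1$ original cells, plus the band $\{v\}\times[\first_i(v),\last_i(v)]$ for each anchor $v$. Enumeration may produce duplicates, because original cells can lie inside bands, and bands of distinct anchors are disjoint since their vertices differ. So the number of emitted cells is at most $(k_i+1)+|\Reach^i_{\text{anchor}}|\le (k_i+1)+|\Reach^i|$, which keeps the overhead linear. To insert each distinct cell only once, I will check a per-agent marker stored in $\mathcal M[(v,t)]$, for instance the last agent inserted. The cost of Lines~1--4 is therefore $O(\sum_i k_i+\sum_i|\Reach^i|)$.

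For Lines~5--6, I will replace the explicit edge set with union--find, as the paper describes. When agent $i$ inserts cell $(v,t)$, I do not union $i$ with every agent already stored there. Instead I keep in $\mathcal M[(v,t)]$ a single representative, the first agent to land on the cell, and union $i$ with that representative. Agents sharing a cell are then transitively connected through it. This gives the same connected components as $H$, because every edge $\{i,j\}$ is witnessed by a common cell, and both $i$ and $j$ are unioned with that cell's representative. Each cell insertion therefore triggers at most one union, and there are at most $\sum_i|\Reach^i|$ unions in total. Reading off the classes costs $N$ finds. Together this is $O((\sum_i|\Reach^i|+N)\,\alpha(N))$, and I expect establishing this bound to be the main subtle point. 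The naive edge enumeration could be quadratic in a cell's multiplicity, so the argument must show that the representative trick loses no connectivity.

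For the dispatch loop, I assume the bound of \oneMAPFC\ promised in Sec.~\ref{sec:single-agent} (Cor.~\ref{cor:single-agent-optimality2}), namely $O(k_i)$ time per singleton. Appending plans costs $O(\sum_i k_i)$. Summing all the blocks gives the stated $\tilde O$ bound.
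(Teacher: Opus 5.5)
Your proposal is correct and follows essentially the same route as the paper's proof. The steps match: an $O(k_i)$ left-to-right pass per agent for $\first_i,\last_i$ and the original cells, $O(|\Reach^i|)$ insertions into $\mathcal M$ each triggering at most one union (the source of the $\alpha(N)$ factor), and an $O(k_i)$ call to \oneMAPFC\ per singleton via Cor.~\ref{cor:single-agent-optimality2}, with your extra bookkeeping (bounding duplicate emissions where original cells fall inside anchor bands, the first-arrival representative rule and why it preserves the components of $H$, and the $N$ final finds) making explicit what the paper's proof leaves implicit.
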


\begin{proof}
We bound the work agent by agent. Fix an agent~$i$ with $k_i=|M^i|$. A single left-to-right pass over $M^i$ computes $\first_i(v),\last_i(v)$ for every visited vertex and emits the original cells $\Reach^i_{\text{orig}}$, in $O(k_i)$ time (Lines~3--4). Emitting the anchor cells $\Reach^i_{\text{anchor}}$ and inserting all $|\Reach^i|$ cells of $\Reach^i$ into the shared map $\mathcal M$ takes $O(|\Reach^i|)$ time; each insertion performs one hash lookup and triggers at most one $\textsc{union}$, contributing $O(|\Reach^i|\,\alpha(N))$ union--find work~\citep{Tarjan1975SetUnion}, where~$\alpha$ is the extremely slowly growing inverse-Ackermann function\footnote{We have that $\alpha(N)\le 4$ for every $N$ of practical interest.}  (Lines~4--6). Each singleton is then solved by one \oneMAPFC\ call in $O(k_i)$ time (Line~10; Cor.~\ref{cor:single-agent-optimality2}). Summing over the $N$ agents,
$$
\begin{aligned}
&\sum_i\Bigl(O(k_i)+O\bigl(|\Reach^i| \cdot  \alpha(N)\bigr)\Bigr)\\
&\quad=\;O\!\Bigl(\sum_i k_i+\alpha(N) \cdot \sum_i|\Reach^i|\Bigr),
\end{aligned}
$$
which is the claimed $\tilde{O}\bigl(\sum_i k_i+\sum_i|\Reach^i|\bigr)$ once the inverse-Ackermann factor is absorbed into $\tilde{O}$.
\end{proof}

The bound is output-sensitive: $|\Reach^i|$ can reach $\Theta(k_i^2)$ in the worst case, though in all our experiments (Sec.~\ref{sec:experiments}) $\sum_i|\Reach^i|$ never exceeds $4\times\sum_i k_i$ on any instance.

The pipeline yields two bounds at no extra cost. The input joint plan $M$ is a global \myemph{upper bound} $\cost(M)\ge\OPT(I)$. The sum of unconstrained single-agent optima is a global \myemph{lower bound}
$
\OPT(I)\;\ge\;\sum_i\OPT(\{i\})
$,
since coordination can only raise an agent's cost. As we will see, these bounds seed the root of the joint solver of Sec.~\ref{sec:improvements}.

\begin{example}[continuing Ex.~\ref{ex:reach}]\label{ex:naive}
On the running example, Alg.~\ref{alg:naive} processes the singleton $I_2=\{3\}$ via the \oneMAPFC\ oracle (Cor.~\ref{cor:single-agent-optimality2}, see Ex.~\ref{ex:single-agent}), returning $\pi^{\star 3}=M^3$ (cost~$0$). The non-trivial component $I_1=\{1,2\}$ is emitted as a self-contained sub-instance and handed to \textsc{JointSolve}; we resolve it in Ex.~\ref{ex:joint} below. The LB/UB seeds are $\sum_i\OPT(\{i\})=2$ and $\cost(M)=8$.
\end{example}

\section{Per-Agent Solver for Singleton Components}
\label{sec:single-agent}

In this section we fill in the per-agent primitive \oneMAPFC\ that the decomposition pipeline of Sec.~\ref{sec:method} dispatches each singleton component to. Given a graph $G$ and a single-agent plan $\pi$ on $G$ (corresponding to $\pi=M^i$ for some agent $i$), \oneMAPFC\ returns a minimum-cost plan obtained from $\pi$ by a sequence of \Collapse\ applications, in time linear in $|\pi|$.

Given a graph $G$ and a single-agent plan $\pi$ on $G$, the \myemph{Collapse DAG}~$D_{G,\pi}=(\mathcal N,\mathcal A)$ is defined as follows. The nodes of~$D_{G,\pi}$ are
$$
\mathcal N\;:=\;\{0,1,\ldots,|\pi|\},
$$
and its arcs are $\mathcal A\;:=\;\mathcal A_{\text{trace}}\cup\mathcal A_{\text{collapse}}$ with
$$
\mathcal A_{\text{trace}}\;:=\;\bigl\{(t,t{+}1)\,\big\vert\,0\le t<|\pi|\bigr\}
$$
and
$$
\mathcal A_{\text{collapse}}\;:=\;\bigl\{(a,b)\,\big\vert\,0\le a<b\le|\pi|\text{ and }\pi(a)=\pi(b)\bigr\}.
$$
An arc $(t,t{+}1)\in\mathcal A_{\text{trace}}$ corresponds to following $\pi$ at timestep $t$ and has cost $\mathds{1}[\pi(t)\ne\pi(t{+}1)]$ (one if $\pi$ moves at timestep $t$, zero if $\pi$ waits). An arc $(a,b)\in\mathcal A_{\text{collapse}}$ corresponds to applying $\Collapse(\pi,a,b)$ and has cost zero.

\begin{lemma}[Collapse-DAG correspondence]\label{obs:interval-dag-correspondence}
Every path in~$D_{G,\pi}$ from node $0$ to node $|\pi|$ corresponds to a single-agent plan $\pi'$ obtained from $\pi$ by applying $\Collapse$ along the path's collapse arcs. Furthermore, the cost of the path equals $\cost(\pi')$.
Conversely, every plan obtained from~$\pi$ by a finite sequence of $\Collapse$ applications corresponds to some path in $D_{G,\pi}$ from node~$0$ to node~$|\pi|$.
\end{lemma}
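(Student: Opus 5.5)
The plan is to prove both directions by making the correspondence explicit. A path from $0$ to $|\pi|$ is determined by the set of collapse arcs it uses, since between them it must follow trace arcs. Because every arc goes forward in time, these collapse arcs $(a_1,b_1),\ldots,(a_r,b_r)$ satisfy $a_1<b_1\le a_2<b_2\le\cdots\le a_r<b_r$. We call such a family of intervals \emph{non-overlapping}: consecutive intervals may share an endpoint but nothing more.

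\textbf{Forward direction.} Given such a path, define $\pi'$ by $\pi'(t)=\pi(a_j)$ for $t\in[a_j,b_j]$ and $\pi'(t)=\pi(t)$ otherwise. This is well defined at shared endpoints $b_j=a_{j+1}$, because there $\pi(a_j)=\pi(b_j)=\pi(a_{j+1})$. Next we check that $\pi'$ is obtained by applying $\Collapse(\cdot,a_j,b_j)$ for $j=1,\ldots,r$ in order. The point is that an earlier collapse on $[a_{j'},b_{j'}]$ with $j'<j$ leaves the values at $a_j$ and $b_j$ equal to the original ones: it can touch $a_j$ only when $b_{j'}=a_j$, and then it writes $\pi(a_{j'})=\pi(b_{j'})=\pi(a_j)$. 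So the precondition $v_{a_j}=v_{b_j}$ still holds when $\Collapse(\cdot,a_j,b_j)$ is applied.

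For the cost, every timestep $t$ with $t,t{+}1\in[a_j,b_j]$ is a wait in $\pi'$, matching the zero-cost collapse arc. Every timestep covered by a trace arc has $\pi'(t)=\pi(t)$ and $\pi'(t{+}1)=\pi(t{+}1)$. This holds even when an endpoint coincides with some $a_j$ or $b_j$, since the collapse keeps the endpoint values. Hence the step is a move exactly when the trace arc has cost $1$, and summing over timesteps gives that the path cost equals $\cost(\pi')$.

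\textbf{Converse.} This is the main obstacle, because collapses can be applied in arbitrary order and can nest or overlap.
\begin{itemize}
\item I will prove by induction on the number of applications that every $\Collapse$-derived plan $\pi'$ has a normal form. Namely, there is a non-overlapping family of intervals $[a_j,b_j]$ with $\pi(a_j)=\pi(b_j)$ (with respect to the original $\pi$) such that $\pi'$ equals $\pi(a_j)$ on each interval and equals $\pi$ elsewhere. Such a family is exactly the set of collapse arcs of a $0\to|\pi|$ path, which proves the converse.
\item For the inductive step, take $\pi''=\Collapse(\pi',a,b)$ where $\pi'$ has normal form $\{[a_j,b_j]\}$. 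Let $a^\ast=a_j$ if $a$ lies in some interval $[a_j,b_j]$ and $a^\ast=a$ otherwise. Define $b^\ast$ symmetrically, as the right end $b_{j'}$ of the interval containing $b$, or $b$ itself. The new family consists of the old intervals disjoint from $[a^\ast,b^\ast]$, together with $[a^\ast,b^\ast]$.
\item It remains to check two things. First, $\pi(a^\ast)=\pi'(a)=\pi'(b)=\pi(b^\ast)$, because on each interval $\pi'$ is constant and equal to the original value at both of its ends. Second, $\pi''$ equals this common value on all of $[a^\ast,b^\ast]$. On $[a,b]$ this holds by definition of the collapse. On the extensions $[a^\ast,a]$ and $[b,b^\ast]$, $\pi'$ was already constant and equal to $\pi'(a)=\pi'(b)$.
\end{itemize}
The delicate point is that $a$ or $b$ may sit exactly on a shared endpoint of two intervals. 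In that case either choice of enclosing interval gives the same value, so I will fix the convention that picks the outermost interval, which keeps the new family non-overlapping.
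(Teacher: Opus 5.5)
Your proposal is correct and follows the paper's proof essentially step for step. The forward direction applies the path's collapses left to right, using that no collapse alters the values at the endpoints of later intervals. The converse is the same induction on a normal form of waiting intervals, widening $[a,b]$ to $[a^\ast,b^\ast]$ exactly as the paper widens $[c,d]$ to $[c^*,d^*]$.

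The only thing to tighten is the invariant. State it with pairwise disjoint intervals, as the paper does; your construction preserves this, and it makes the outermost-interval convention unnecessary. Under mere non-overlap, an old interval touching $[a^\ast,b^\ast]$ only at $a^\ast$ is neither disjoint from nor contained in it, so your rule would wrongly discard it. For example, take the family $\{[0,2],[2,4]\}$ for $\pi=\langle x,y,x,z,x\rangle$ and apply $\Collapse(\pi',3,4)$: your rule yields the family $\{[2,4]\}$, although $\pi''(1)=x\neq\pi(1)$.
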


\begin{proof}
Consider first a $0\!\to\!|\pi|$ path $P$, and apply the $\Collapse$ operations of its collapse arcs in left-to-right order. Every such arc $(a,b)$ satisfies $\pi(a)=\pi(b)$; since an application alters positions only in the interior of its interval, and the arcs of $P$ traverse internally-disjoint intervals, the plan to which $\Collapse(\cdot,a,b)$ is applied agrees with $\pi$ at both $a$ and~$b$, so each application is admissible. The resulting plan $\pi'$ waits at the anchor across every collapse arc of $P$ and follows $\pi$ elsewhere. Consequently, a trace arc $(t,t{+}1)$ of $P$ costs one exactly when $\pi'$ moves at time~$t$, collapse arcs cost zero while $\pi'$ waits across them, and the cost of~$P$ equals $\cost(\pi')$.

For the converse, we show by induction on the number of $\Collapse$ applications that every derived plan $\pi'$ waits across pairwise-disjoint intervals $[a_1,b_1],\ldots,[a_m,b_m]$ satisfying $\pi'(t)=\pi(a_j)=\pi(b_j)$ for $t\in[a_j,b_j]$ and follows~$\pi$ elsewhere; the path that takes the collapse arc $(a_j,b_j)$ across each interval and trace arcs elsewhere then corresponds to~$\pi'$. The claim is immediate for zero applications. For the inductive step, apply $\Collapse(\pi',c,d)$ and denote $w:=\pi'(c)=\pi'(d)$. If $c$ lies in an interval $[a_i,b_i]$ then $w=\pi(a_i)$ and we set $c^*:=a_i$; otherwise $w=\pi(c)$ and we set $c^*:=c$. The index $d^*$ is defined symmetrically. The new plan waits at $w$ across all of $[c^*,d^*]$ (on $[c^*,c)$ and $(d,d^*]$ it already did), we have $\pi(c^*)=\pi(d^*)=w$, and every previous interval is either contained in $[c^*,d^*]$ or disjoint from it; the intervals of the new plan therefore again have the claimed form.
\end{proof}

Fig.~\ref{fig:interval-dag} shows the Collapse DAG of the running-example plan $M^1$ and its minimum-cost path.

\begin{figure}[t]
\centering
\begin{tikzpicture}[
  vert/.style={circle, draw, minimum size=7mm, inner sep=1pt, font=\small}
]
\node[vert] (n0) at (0, 0)    {$0$};
\node[vert] (n1) at (1.6, 0)  {$1$};
\node[vert] (n2) at (3.2, 0)  {$2$};
\node[vert] (n3) at (4.8, 0)  {$3$};
\node[vert] (n4) at (6.4, 0)  {$4$};
\node[above=1pt, font=\small] at (n0.north) {$a$};
\node[above=1pt, font=\small] at (n1.north) {$b$};
\node[above=1pt, font=\small] at (n2.north) {$c$};
\node[above=1pt, font=\small] at (n3.north) {$b$};
\node[above=1pt, font=\small] at (n4.north) {$e$};
\draw[->] (n0) -- node[below, font=\scriptsize] {$1$} (n1);
\draw[->] (n1) -- node[below, font=\scriptsize] {$1$} (n2);
\draw[->] (n2) -- node[below, font=\scriptsize] {$1$} (n3);
\draw[->] (n3) -- node[below, font=\scriptsize] {$1$} (n4);
\draw[->, dashed, bend left=45] (n1) to node[above, font=\scriptsize] {$0$ (anchor $b$)} (n3);
\end{tikzpicture}
\caption{The Collapse DAG $D_{G,\pi}$ for the plan $\pi=\langle a,b,c,b,e\rangle$ on the running-example graph. Above each node $t$ is the position $\pi(t)$. Solid arrows are trace arcs (cost $\mathds{1}[\pi(t)\ne\pi(t{+}1)]$); the dashed arrow is the unique collapse arc at anchor $b$. The minimum-cost path from $0$ to $4$ is $0\!\to\!1\!\to\!3\!\to\!4$, has cost $2$ and uses the collapse arc.}
\label{fig:interval-dag}
\end{figure}

Since $D_{G,\pi}$ is a DAG, a minimum-cost path from $0$ to~$|\pi|$ is computable in time linear in $|\mathcal A|$ by processing the nodes in increasing index order. In the worst case, however, $|\mathcal A|$ is quadratic in $|\pi|$. \oneMAPFC\ (Alg.~\ref{alg:single-agent}) computes it in $O(|\pi|)$ time and space regardless, by exploiting the structure of the collapse arcs: all collapse arcs entering node~$t$ cost zero and originate at earlier occurrences of the vertex $\pi(t)$, so it suffices to remember, for each vertex, its cheapest occurrence so far and the quadratic arc set is never materialized.

\begin{algorithm}[t]
\caption{\oneMAPFC\ (single-agent \MAPFC): minimum-cost collapse of $\pi$ on $G$.}\label{alg:single-agent}
\begin{algorithmic}[1]
\Require Graph $G$ and single-agent plan $\pi$ on $G$.
\Ensure A min-cost plan $\pi^\star$ obtained from $\pi$ by a sequence of $\Collapse$ applications, with $\cost(\pi^\star)$.
\State $\text{dist}[0]\gets 0$;\ $\text{dist}[t]\gets\infty$ for $t=1,\ldots,|\pi|$
\State $\mathcal B\gets\emptyset$;\ $\mathcal B[\pi(0)]\gets 0$\Comment{argmin hash}
\For{$t=1,2,\ldots,|\pi|$}
  \State $w\gets\mathds{1}[\pi(t{-}1)\ne\pi(t)]$;\ $\text{dist}[t]\gets\text{dist}[t{-}1]+w$
  \State $\text{parent}[t]\gets t{-}1$
  \If{$\pi(t)\in\mathcal B$ \textbf{and} $\text{dist}[\mathcal B[\pi(t)]]<\text{dist}[t]$}
    \State $\text{dist}[t]\gets\text{dist}[\mathcal B[\pi(t)]]$;\ $\text{parent}[t]\gets\mathcal B[\pi(t)]$
  \EndIf
  \If{$\pi(t)\notin\mathcal B$ \textbf{or} $\text{dist}[t]<\text{dist}[\mathcal B[\pi(t)]]$}
    \State $\mathcal B[\pi(t)]\gets t$
  \EndIf
\EndFor
\State $\pi^\star\gets$ path reconstructed via $\text{parent}$ pointers\Comment{Lem.~\ref{obs:interval-dag-correspondence}}
\State \Return $\pi^\star,\ \text{dist}[|\pi|]$
\end{algorithmic}
\end{algorithm}

\oneMAPFC\ receives a graph $G$ and a path $\pi$ and maintains two data structures: (i)~a distance array $\text{dist}:\mathcal N\to\mathbb{R}$, with $\text{dist}[t]$ storing the least cost of a $0$-to-$t$ path found so far; and (ii)~a per-vertex argmin hash $\mathcal B:V\to\mathcal N$, with $\mathcal B[v]=\arg\min_{a:\pi(a)=v}\text{dist}[a]$ storing the visited index of vertex~$v$ of least $\text{dist}$, which lets the best collapse arc into a node be retrieved in $O(1)$. Thus, instead of enumerating the collapse arcs entering node~$t$, the algorithm reads the single entry $\mathcal B[\pi(t)]$, which holds the tail of the cheapest such arc.
We initialize both structures (Lines~1--2), then make a single left-to-right pass over the nodes $t=1,\ldots,|\pi|$ (Line~3).
At node $t$ we first follow the trace arc, setting $\text{dist}[t]\gets\text{dist}[t{-}1]+\mathds{1}[\pi(t{-}1)\ne\pi(t)]$ and $\text{parent}[t]\gets t{-}1$ (Lines~4--5).
If the stored index $\mathcal B[\pi(t)]$ offers a smaller distance, we take the zero-cost collapse arc from it instead, updating $\text{dist}[t]$ and $\text{parent}[t]$ accordingly (Lines~6--7).
We then refresh $\mathcal B[\pi(t)]$ whenever $t$ improves on the stored argmin for vertex $\pi(t)$ (Lines~8--9), so later nodes find the best collapse source in $O(1)$.
Once the pass completes, a backward walk over the parent pointers reconstructs the optimal plan $\pi^\star$ (Line~10), returned with its cost $\text{dist}[|\pi|]$ (Line~11).

\begin{lemma}[single-pass sweep]\label{lem:smart-sweep}
For a graph $G$ and a single-agent plan $\pi$, \oneMAPFC\ (Alg.~\ref{alg:single-agent}) computes a minimum-cost path from $0$ to~$|\pi|$ in the Collapse DAG, using~$O(|\pi|)$ time and space.
\end{lemma}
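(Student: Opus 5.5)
The plan is to prove correctness by induction over the left-to-right pass, using the invariant below, and then to bound the cost separately. The invariant is that after iteration $t$, for every $s\le t$, $\text{dist}[s]$ equals the true shortest-path distance $d(s)$ from node $0$ to node $s$ in $D_{G,\pi}$. In addition, for every vertex $v$ visited among $\pi(0),\ldots,\pi(t)$, $\mathcal B[v]$ holds an index $a\le t$ with $\pi(a)=v$ and $\text{dist}[\mathcal B[v]]=\min\{d(a'):a'\le t,\ \pi(a')=v\}$. The base case $t=0$ holds by Lines~1--2, since $d(0)=0$ and node $0$ is the only visited index.

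For the inductive step at node $t$, I will first note that every arc of $D_{G,\pi}$ entering $t$ is either the trace arc $(t{-}1,t)$ or a collapse arc $(a,t)$ with $a<t$ and $\pi(a)=\pi(t)$, and all collapse arcs cost zero. Because $D_{G,\pi}$ is a DAG whose arcs go from lower to higher index, we have
$$d(t)=\min\Bigl(d(t{-}1)+\mathds{1}[\pi(t{-}1)\ne\pi(t)],\ \min_{a<t,\,\pi(a)=\pi(t)} d(a)\Bigr).$$
By the inductive hypothesis, Line~4 computes the first term exactly. The inner minimum of the second term is precisely $\text{dist}[\mathcal B[\pi(t)]]$, or it is vacuous ($+\infty$) when $\pi(t)\notin\mathcal B$. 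Lines~6--7 therefore set $\text{dist}[t]=d(t)$, and the parent pointer records an arc attaining the minimum. Lines~8--9 then restore the $\mathcal B$ invariant, since the new minimum over indices $\le t$ at vertex $\pi(t)$ is the smaller of the old stored value and $d(t)$. Entries for other vertices are unchanged and remain correct. Following parent pointers from $|\pi|$ back to $0$ traces a path in which each arc is tight, so the path's cost is $\text{dist}[|\pi|]=d(|\pi|)$ and it is a minimum-cost path. By Lem.~\ref{obs:interval-dag-correspondence}, this path also yields the plan $\pi^\star$.

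For complexity, each iteration performs $O(1)$ hash operations (expected time, or worst case with the hashing assumptions used elsewhere in the paper) and $O(1)$ arithmetic. The loop therefore takes $O(|\pi|)$ time. The arrays $\text{dist}$ and $\text{parent}$ have size $|\pi|+1$, and $\mathcal B$ holds at most $|\pi|+1$ keys. Reconstruction walks at most $|\pi|$ parent pointers, and expanding each collapse arc $(a,b)$ into $b-a$ waits costs $O(|\pi|)$ in total, because the arcs on the path span disjoint intervals.

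The main obstacle is the subtle point in the $\mathcal B$ invariant: $\mathcal B$ must track the minimum of the true distances over all earlier occurrences, not merely the latest one. This is why the induction stores $\text{dist}[\mathcal B[v]]$ as a prefix minimum and why Lines~8--9 compare against it before updating. Ties are harmless, because any minimizer suffices.
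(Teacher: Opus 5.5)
Your proof is correct and follows the same route as the paper: a topological-order sweep in which the trace arc and the single cheapest collapse arc, read from $\mathcal B$, together give the minimum over all arcs entering each node, with $O(1)$ work per node and no materialization of the quadratic arc set. Your explicit prefix-minimum invariant on $\mathcal B$ and the tight-arc argument for the parent pointers spell out what the paper asserts in a sentence each. The same holds for your disjoint-interval bound on expanding collapse arcs during reconstruction.
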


\begin{proof}
\emph{Correctness.} Nodes are processed in increasing order $t=0,\ldots,|\pi|$, and every arc into $t$ leaves an earlier node $a<t$; thus $\text{dist}[t]$ is computed from already-final values, and this topological-order sweep yields exact distances. Node $t$ has one trace arc $(t{-}1,t)$ and, possibly, several collapse arcs; every collapse arc into $t$ has the form $(a,t)$ with $\pi(a)=\pi(t)$ and cost zero, so the cheapest originates at $\arg\min_{a:\pi(a)=\pi(t)}\text{dist}[a]$, which the hash $\mathcal B$ maintains (Lines~8--9). Setting $\text{dist}[t]$ to the smaller of the trace-arc candidate (Line~4) and this collapse candidate (Lines~6--7) is therefore the minimum over all arcs entering $t$, and the parent pointers encode a corresponding minimum-cost $0$-to-$|\pi|$ path (Line~10).
\emph{Complexity.} Each node costs $O(1)$ amortized work, a constant number of array and hash operations (the arc set $\mathcal A$ is never constructed), and the backward trace is $O(|\pi|)$; the total is $O(|\pi|)$ time and space.
\end{proof}

Combining Lem.~\ref{lem:smart-sweep} with Lem.~\ref{obs:interval-dag-correspondence}, we obtain:

\begin{corollary}[\oneMAPFC\ optimality]\label{cor:single-agent-optimality2}
For a graph $G$ and a single-agent plan $\pi$, a minimum-cost plan obtained from $\pi$ by a sequence of $\Collapse$ applications can be computed in time $O(|\pi|)$.
\end{corollary}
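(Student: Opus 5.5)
The plan is to chain the two preceding results, Lem.~\ref{obs:interval-dag-correspondence} and Lem.~\ref{lem:smart-sweep}, and add only the bookkeeping that turns a path into a plan within the time bound. First, Lem.~\ref{obs:interval-dag-correspondence} gives a cost-preserving correspondence between $0\!\to\!|\pi|$ paths in $D_{G,\pi}$ and plans derivable from $\pi$ by finitely many $\Collapse$ applications. It runs in both directions: every path yields a derived plan of equal cost, and every derived plan arises from some path. Taking minima on both sides, the minimum cost over derived plans equals the minimum path cost in $D_{G,\pi}$. Moreover, the plan corresponding to a minimum-cost path attains this optimum.

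Second, Lem.~\ref{lem:smart-sweep} states that Alg.~\ref{alg:single-agent} computes such a minimum-cost path in $O(|\pi|)$ time and space, encoded by the parent pointers together with $\text{dist}[|\pi|]$. It remains to check that Line~10, which converts this path into the plan $\pi^\star$, also takes $O(|\pi|)$ time.

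I would do this by a backward walk from node $|\pi|$. At node $t$ with $p=\text{parent}[t]$:
\begin{itemize}
\item if $p=t-1$ and the arc is a trace arc, set $\pi^\star(t)=\pi(t)$;
\item otherwise the arc is a collapse arc $(p,t)$, and we set $\pi^\star(s)=\pi(t)$ for all $s\in[p,t]$.
\end{itemize}
Finally set $\pi^\star(0)=\pi(0)$. The intervals covered by the walk are consecutive and internally disjoint, so each index is written $O(1)$ times. The reconstruction is therefore $O(|\pi|)$.

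One subtlety is that a collapse arc $(t-1,t)$ has the same endpoints as a trace arc. This is harmless: such a collapse arc exists only when $\pi(t-1)=\pi(t)$, in which case both arcs represent a wait with cost zero and both readings assign $\pi(t)$. By the constructive direction of Lem.~\ref{obs:interval-dag-correspondence}, the resulting $\pi^\star$ is obtained from $\pi$ by applying the collapses of the path's collapse arcs in left-to-right order, and its cost is $\text{dist}[|\pi|]$.

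Combining the two steps, $\pi^\star$ is a minimum-cost derived plan and is computed in $O(|\pi|)$ total time. The only step requiring care is this reconstruction: confirming that it stays linear and that the parent pointers unambiguously determine the plan. The optimality itself is immediate from the two lemmas.
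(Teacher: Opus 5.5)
Your proposal is correct and follows the paper's route: the corollary is obtained by combining the linear-time minimum-cost-path computation of Lem.~\ref{lem:smart-sweep} with the two-directional, cost-preserving correspondence of Lem.~\ref{obs:interval-dag-correspondence}. Your explicit check that the parent-pointer reconstruction of $\pi^\star$ is linear and unambiguous adds detail to what the paper covers with the remark ``the backward trace is $O(|\pi|)$'' in the proof of Lem.~\ref{lem:smart-sweep}.
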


\begin{example}[continuing Ex.~\ref{ex:setup}]\label{ex:single-agent}
We instantiate the construction on each of the three input plans of the running example, with $k_i=4$ for every agent.
\begin{itemize}
    \item For $M^1=\langle a,b,c,b,e\rangle$, $D_{G,M^1}$ (Fig.~\ref{fig:interval-dag}) has four trace arcs of cost~$1$ and a single collapse arc $1\!\to\!3$ (anchor~$b$). The minimum-cost path $0\!\to\!1\!\to\!3\!\to\!4$ has cost~$2$ and yields $\pi^{\star 1}=\langle a,b,b,b,e\rangle$ (Ex.~\ref{ex:single-collapse}).

    \item For $M^2=\langle b,f,b,f,b\rangle$, $D_{G,M^2}$ has all four trace arcs of cost~$1$ and four collapse arcs: $0\!\to\!2$, $0\!\to\!4$, $2\!\to\!4$ (anchor $b$), and $1\!\to\!3$ (anchor $f$). The minimum-cost path is the single-arc path $0\!\to\!4$, applying $\Collapse(M^2,0,4)$, of cost $0$ and corresponding plan $\pi^{\star 2}=\langle b,b,b,b,b\rangle$.

    \item For $M^3=\langle d,d,d,d,d\rangle$, every trace arc has cost $0$ and the minimum-cost path already costs~$0$.
\end{itemize}

The per-agent optima are $\OPT(\{1\})=2$ and $\OPT(\{2\})=\OPT(\{3\})=0$, summing to a lower bound of $2$ on the global optimum of \MAPFC.
\end{example}

\section{A Joint Solver for Non-Trivial Components}
\label{sec:improvements}

In this section we develop a joint solver for the non-trivial components that the decomposition pipeline of Sec.~\ref{sec:method} emits, each a self-contained \MAPFC\ sub-instance on its agents. The framework is agnostic to which solver is plugged in:
\Judgelight\ of \citet{Tang2026Judgelight} applied per sub-instance is one valid configuration (evaluated in Sec.~\ref{sec:experiments}); below we describe an exact \CBS-style joint solver, Collapse-\CBS\ (\CCBS) (Sec.~\ref{subsec:ccbs}) tailored to the non-trivial components, which builds on a constrained single-agent primitive (Sec.~\ref{subsec:constrained-dag}). 
\CCBS\ naturally uses \MAPF-based optimizations, outlined in Sec.~\ref{subsec:optimizations} and detailed in App.~\ref{app:engineering}.

\subsection{The Constrained Single-Agent DAG}
\label{subsec:constrained-dag}

As we will develop a \CBS-style joint solver (Sec.~\ref{subsec:ccbs}), the low-level single-agent planner of Sec.~\ref{sec:single-agent} must account for constraints induced by the high-level search. To this end, we introduce \myemph{negative reach constraints}, each of the form ``$i\notin(v,t)$'', that forbid agent $i$ from occupying cell $(v,t)$ under its plan.
We collect the negative reach constraints on agent~$i$ into a set $\mathcal C_i\subseteq V\times\{0,\dots,T(M)\}$ of \myemph{forbidden cells}, one cell $(v,t)\in\mathcal C_i$ per constraint ``$i\notin(v,t)$'' emitted by the high-level search. Relative to $\mathcal C_i$ we define three notions: (i)~a time $t\in[\first_i(v),\last_i(v)]$ is a \myemph{forbidden time} of anchor~$v$ if $(v,t)\in\mathcal C_i$; (ii)~a \myemph{forbidden-free window} of anchor~$v$ is a maximal interval $W\subseteq[\first_i(v),\last_i(v)]$ containing no forbidden time of $v$; and (iii)~a node $t\in\{0,\ldots,k_i\}$ is \myemph{blocked} if $(M^i(t),t)\in\mathcal C_i$, i.e., $\mathcal C_i$ forbids agent $i$ from its own trajectory cell at time $t$.

We denote $D_i:=D_{G,M^i}=(\mathcal N_i,\mathcal A_i)$ to be the Collapse DAG (Sec.~\ref{sec:single-agent}) of agent~$i$'s input plan, where $\mathcal N_i=\{0,\ldots,k_i\}$ (recall that $k_i=T(M)$ by Asm.~\ref{asm:common-makespan}, so the node indices span the same horizon as the forbidden-cell times). Define the \myemph{constrained Collapse DAG} $\tilde D_i(\mathcal C_i)=(\tilde{\mathcal N}_i,\tilde{\mathcal A}_i)$ as follows: the node set $\tilde{\mathcal N}_i=\{t\in\mathcal N_i \mid t \text{ is not blocked}\}$ excludes the blocked nodes; a trace arc $(t,t{+}1)$ is retained iff both of its endpoints lie in~$\tilde{\mathcal N}_i$; and a collapse arc $a\!\to\! b$ with anchor $v$, corresponding to $\Collapse(M^i,a,b)$, is retained iff $[a,b]$ is contained in some forbidden-free window of $v$.
%

\begin{proposition}[constrained correspondence]\label{prop:constrained-correspondence}
The $0\!\to\! k_i$ paths of $\tilde D_i(\mathcal C_i)$ correspond to the $\Collapse$-derived plans of~$M^i$ satisfying $\mathcal C_i$ and have equal cost. In particular, (i) a minimum-cost $0\!\to\! k_i$ path is a minimum-cost such plan, and (ii) if no $0\!\to\! k_i$ path exists, then no $\Collapse$-derived plan satisfies~$\mathcal C_i$.
\end{proposition}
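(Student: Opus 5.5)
The plan is to reduce to Lem.~\ref{obs:interval-dag-correspondence} and show that the edge and node deletions defining $\tilde D_i(\mathcal C_i)$ remove exactly the paths whose corresponding plans violate $\mathcal C_i$. That lemma identifies every $0\!\to\!k_i$ path $P$ of $D_i$ with a $\Collapse$-derived plan $\pi'_P$ of equal cost. Conversely, by the inductive normal form in its proof, every derived plan waits at $M^i(a_j)=M^i(b_j)$ across pairwise-disjoint intervals $[a_j,b_j]$ and follows $M^i$ elsewhere. Hence every derived plan equals $\pi'_P$ for the path $P$ that takes collapse arcs $(a_j,b_j)$ and trace arcs elsewhere. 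Since $\tilde D_i(\mathcal C_i)$ is a subgraph of $D_i$ with the same arc costs, cost equality is inherited. It therefore suffices to prove that a path $P$ of $D_i$ survives in $\tilde D_i(\mathcal C_i)$ if and only if $\pi'_P$ satisfies $\mathcal C_i$. Claims (i) and (ii) then follow immediately.

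To establish this equivalence I would first describe the cells occupied by $\pi'_P$. At any time $t$ not strictly inside a collapse arc of $P$, $\pi'_P(t)=M^i(t)$. For each collapse arc $(a,b)$ of $P$ with anchor $v$, $\pi'_P(t)=v$ for all $t\in[a,b]$. The time $t$ lies on $P$ as a node exactly when it is not strictly interior to a collapse arc.

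For the ($\Leftarrow$) direction, suppose $\pi'_P$ satisfies $\mathcal C_i$. Every node $t$ visited by $P$ has $\pi'_P(t)=M^i(t)$, so $(M^i(t),t)\notin\mathcal C_i$ and $t$ is not blocked. Every trace arc of $P$ therefore joins two unblocked nodes and is retained. For each collapse arc $(a,b)$ with anchor $v$, the cells $(v,t)$ for $t\in[a,b]$ are all occupied and hence not forbidden. Moreover $[a,b]\subseteq[\first_i(v),\last_i(v)]$, since $M^i(a)=M^i(b)=v$. So $[a,b]$ contains no forbidden time of $v$ and lies inside a maximal forbidden-free window, and the arc is retained.

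For the ($\Rightarrow$) direction, suppose $P$ lies in $\tilde D_i(\mathcal C_i)$, and take any time $t$. If $t$ is a node of $P$, then it is unblocked, so $(\pi'_P(t),t)=(M^i(t),t)\notin\mathcal C_i$. Otherwise $t$ is strictly inside some collapse arc $(a,b)$ with anchor $v$. Retention of that arc places $[a,b]$ inside a forbidden-free window of $v$, so $(v,t)\notin\mathcal C_i$.

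The only point needing care is a detail of bookkeeping: a node at an endpoint of a collapse arc occupies both its own trace cell and the anchor cell, and these coincide because $M^i(a)=v$. So the blocked-node test and the window test agree there and no cell is double-counted or missed. I expect this endpoint consistency, together with checking that every time $0,\ldots,k_i$ is covered by either a node of $P$ or an arc interior, to be the main (mild) obstacle. The argument relies on $k_i=T(M)$ (Asm.~\ref{asm:common-makespan}) so that the node indices span the same horizon as the forbidden-cell times.
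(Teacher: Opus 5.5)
Your proposal is correct and follows essentially the same route as the paper. You reduce to the Collapse-DAG correspondence lemma and check, in both directions, that a path of $D_i$ survives in $\tilde D_i(\mathcal C_i)$ exactly when its plan avoids $\mathcal C_i$: unblocked visited nodes handle the trace arcs, and forbidden-free windows handle the collapse arcs. Your two explicit observations only spell out steps the paper leaves implicit, namely that every time is either a visited node or strictly interior to a collapse arc, and that $[a,b]\subseteq[\first_i(v),\last_i(v)]$.
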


\begin{proof}
$\tilde D_i(\mathcal C_i)$ is $D_i$ with the blocked nodes and every arc that would violate $\mathcal C_i$ removed, namely the arcs incident to a blocked node and collapse arcs whose window contains a forbidden time. By Lem.~\ref{obs:interval-dag-correspondence} the $0\!\to\! k_i$ paths of $D_i$ correspond to the $\Collapse$-derived plans of $M^i$ at equal cost, so it remains to match the removed arcs with $\mathcal C_i$. Any $0\!\to\! k_i$ path in $\tilde D_i(\mathcal C_i)$ occupies, at each time, either $M^i(t)$ at an unblocked node or an anchor across a forbidden-free window, both avoiding $\mathcal C_i$. Conversely, let~$\pi$ be a $\Collapse$-derived plan satisfying~$\mathcal C_i$ and let~$p$ be its $0\!\to\! k_i$ path in~$D_i$, guaranteed by Lem.~\ref{obs:interval-dag-correspondence}. Every node and arc of~$p$ is retained: (i)~at each node~$t$ visited by~$p$ the plan is at $M^i(t)$, whether~$p$ enters~$t$ along a trace arc or along a collapse arc whose anchor is $M^i(t)$ itself, so $(M^i(t),t)\notin\mathcal C_i$, no visited node is blocked, and every trace arc of~$p$ is retained; and (ii)~a collapse arc $a\!\to\! b$ of~$p$ with anchor~$v$ keeps~$\pi$ at~$v$ throughout~$[a,b]$, so no $t\in[a,b]$ is a forbidden time of~$v$ and $[a,b]$ lies in a forbidden-free window. Hence~$p$ is a path of $\tilde D_i(\mathcal C_i)$. The paths of $\tilde D_i(\mathcal C_i)$ therefore correspond to exactly the $\mathcal C_i$-satisfying plans, at equal cost.
\end{proof}

We extend \oneMAPFC\ (Alg.~\ref{alg:single-agent}, Lem.~\ref{lem:smart-sweep}) to compute a minimum-cost path in $\tilde D_i(\mathcal C_i)$.
Specifically, we start by preprocessing $\mathcal C_i$ to construct, for every time $t$, the set $\mathcal R[t]\subseteq V$ of vertices forbidden at time~$t$. $\mathcal R$ is computed by iterating over all constraints in~$\mathcal C_i$ and for each one updating~$\mathcal R$.
We then process the nodes of~$\tilde D_i(\mathcal C_i)$ in topological order $t=0,\ldots,k_i$, maintaining the distance array $\text{dist}[\,\cdot\,]$ and an argmin hash $\mathcal B:V\to\mathcal N_i$ that, for each vertex, holds the cheapest index within its current forbidden-free window.
If node~$0$ is blocked, i.e., $M^i(0)\in\mathcal R[0]$, the algorithm terminates, as no sequence of $\Collapse$ applications alters the position at time~$0$. Otherwise, we initialize $\text{dist}[0]\gets 0$ and seed $\mathcal B[M^i(0)]\gets 0$.
At each step $t\ge 1$ we perform, in order:
(i)~erase $\mathcal B[v]$ for every $v\in\mathcal R[t]$, so that later visits of~$v$ cannot take a collapse arc from an index before~$t$;
(ii)~if $t$ is blocked, i.e., $M^i(t)\in\mathcal R[t]$, set $\text{dist}[t]\gets\infty$; otherwise set $\text{dist}[t]$ to the smaller of the trace-arc value $\text{dist}[t{-}1]+\mathds{1}[M^i(t{-}1)\ne M^i(t)]$ and the collapse-arc value $\text{dist}[\mathcal B[M^i(t)]]$ (collapse arcs cost zero), recording $\text{parent}[t]$ as the tail of the chosen arc;
and
(iii)~set $\mathcal B[M^i(t)]\gets t$ if $\text{dist}[t]<\text{dist}[\mathcal B[M^i(t)]]$, where an unset entry counts as~$\infty$.
We refer to this procedure as the \myemph{constrained sweep}.

\begin{lemma}[constrained sweep]\label{lem:constrained-sweep}
The constrained sweep computes a minimum-cost $0\!\to\! k_i$ path in $\tilde D_i(\mathcal C_i)$, or reports that none exists, in $O(k_i+|\mathcal C_i|)$ time and space.
\end{lemma}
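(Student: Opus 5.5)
We mirror the proof of Lem.~\ref{lem:smart-sweep}, adapting the invariant on the hash $\mathcal B$ to forbidden-free windows. Every arc of $\tilde D_i(\mathcal C_i)$ goes from a smaller to a larger index, so processing $t=0,\ldots,k_i$ is a topological order. Hence it suffices to show that at step $t$ the computed $\text{dist}[t]$ equals the minimum, over all arcs of $\tilde D_i(\mathcal C_i)$ entering $t$, of $\text{dist}[\text{tail}]+\text{cost}$, and that blocked nodes get $\infty$. Correctness of the returned path then follows by induction on $t$, with parent pointers as in Line~10. The ``no path'' report corresponds to $\text{dist}[k_i]=\infty$, or to the early exit when node~$0$ is blocked. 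Prop.~\ref{prop:constrained-correspondence} then transfers both outcomes to $\Collapse$-derived plans, although the lemma itself only concerns the DAG.

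The key invariant, which I expect to be the main obstacle, is the following. Just after step~(i) at time $t$, for every vertex $v$, the entry $\mathcal B[v]$ (if set) is an argmin of $\text{dist}[a]$ over the indices $a<t$ such that:
\begin{itemize}
\item[(a)] $M^i(a)=v$, and
\item[(b)] no $s\in[a,t]$ has $v\in\mathcal R[s]$.
\end{itemize}
Here $\mathcal B[v]$ is unset exactly when every such $a$ has $\text{dist}[a]=\infty$, or no such $a$ exists.

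Condition~(b), together with $M^i(t)=v$ and $t$ unblocked, says precisely that $[a,t]$ lies in a single forbidden-free window of $v$. Note that $a$ itself is then unblocked, and $[a,t]\subseteq[\first_i(v),\last_i(v)]$ holds automatically. So these indices are exactly the tails of retained collapse arcs into $t$.

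The invariant is proved by induction on $t$:
\begin{itemize}
\item \emph{Step~(i).} Erasing $\mathcal B[v]$ for $v\in\mathcal R[t]$ removes exactly the candidates whose interval would now contain the forbidden time $t$, and no candidates exist for such a $v$ afterwards.
\item \emph{Vertices not erased.} For every other vertex the candidate set is unchanged, apart from possibly gaining index $t$. This is handled by step~(iii), which also maintains the argmin.
\end{itemize}
One subtlety needs checking. A blocked $t$ has $\text{dist}[t]=\infty$ and so never enters $\mathcal B$, which is consistent because blocked nodes have no outgoing arcs. Likewise, when $t$ is blocked the trace arc $(t,t{+}1)$ correctly yields $\infty$, since $\text{dist}[t]=\infty$.

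The minimum at step~(ii) is then the smaller of two values. The first is the trace candidate, which is valid because the trace arc is retained iff both endpoints are unblocked, and $\infty$ propagates when $t{-}1$ is blocked. The second is the collapse candidate read from $\mathcal B[M^i(t)]$, which is correct by the invariant.

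For complexity, building $\mathcal R$ costs $O(k_i+|\mathcal C_i|)$: we allocate $k_i{+}1$ buckets and append each constraint once. Across the whole sweep, step~(i) performs $\sum_t|\mathcal R[t]|=|\mathcal C_i|$ erasures. Each of steps~(ii) and~(iii), and the blocked test (a hash-set membership check in $\mathcal R[t]$), takes $O(1)$ expected time per node. Backtracking takes $O(k_i)$. Space is dominated by $\text{dist}$, $\text{parent}$, $\mathcal R$ and $\mathcal B$, all of which are $O(k_i+|\mathcal C_i|)$.
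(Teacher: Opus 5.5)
Your proposal is correct and takes essentially the same route as the paper. The paper also uses a topological-order sweep whose only nontrivial point is that $\mathcal B[M^i(t)]$ holds the cheapest collapse tail not separated from $t$ by a forbidden time: entries enter only when they improve the incumbent, and they are erased at every forbidden time of their vertex. Its complexity bound likewise rests on bucketing $\mathcal R$ and counting at most $|\mathcal C_i|$ erasures in total. Your invariant, stated just after step~(i) and proved by induction, is a more formal version of the paper's one-line argument that ``exactly the tails not separated from $t$ by a forbidden time survive.''
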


\begin{proof}
\emph{Correctness.} As in Lem.~\ref{lem:smart-sweep}, the sweep processes nodes in topological order. Node~$0$ is settled at initialization: if it is blocked, it is not a node of $\tilde D_i(\mathcal C_i)$, so no $0\!\to\! k_i$ path exists and the immediate termination is correct; otherwise $\text{dist}[0]=0$ is the cost of the empty path.

Thus, it suffices to show that step~(ii) sets $\text{dist}[t]$ to the minimum of~$\text{dist}[a]$ plus the arc cost, over all arcs $a\!\to\! t$ of $\tilde D_i(\mathcal C_i)$. A blocked~$t$ is not a node of $\tilde D_i(\mathcal C_i)$ and $\text{dist}[t]$ is correctly set to~$\infty$. For an unblocked~$t$, the incoming arcs are the trace arc from $t{-}1$ and the zero-cost collapse arcs whose tails are the indices $a<t$ of the anchor~$M^i(t)$ with no forbidden time of $M^i(t)$ in $[a,t]$; we claim that $\mathcal B[M^i(t)]$ holds the cheapest such tail at the read. Indeed, an index enters $\mathcal B$ at its own iteration only if it improves the incumbent (step~(iii); blocked nodes, whose $\text{dist}$ is $\infty$, never enter), and an entry is erased at every later forbidden time of its vertex (step~(i), those of iteration~$t$ preceding the read); hence exactly the tails not separated from~$t$ by a forbidden time survive, the cheapest one stored. Step~(ii) therefore sets $\text{dist}[t]$ to this minimum; the distances are exact, $\text{dist}[k_i]=\infty$ iff no $0\!\to\! k_i$ path exists, and otherwise the parent pointers trace back a minimum-cost path.

\emph{Complexity.} Constructing $\mathcal R$ takes $O(k_i+|\mathcal C_i|)$ time: initializing its $k_i{+}1$ buckets is $O(k_i)$, and the single scan inserts each cell of $\mathcal C_i$ into its bucket in $O(1)$. Each iteration of the sweep then performs $O(1)$ work beyond the erasures of step~(i), and each cell of $\mathcal C_i$ triggers at most one erasure over the entire sweep, so the erasures total $O(|\mathcal C_i|)$. The structures $\text{dist}$, $\text{parent}$, $\mathcal B$, and $\mathcal R$ occupy $O(k_i+|\mathcal C_i|)$ space.
\end{proof}

\begin{corollary}[constrained \oneMAPFC]\label{cor:constrained-single-agent}
One can compute, in $O(k_i+|\mathcal C_i|)$ time and space, a minimum-cost $\Collapse$-derived plan of $M^i$ satisfying $\mathcal C_i$, or report that no such plan exists.
\end{corollary}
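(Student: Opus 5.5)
The corollary follows by chaining Prop.~\ref{prop:constrained-correspondence} with Lem.~\ref{lem:constrained-sweep}, in the same way Cor.~\ref{cor:single-agent-optimality2} was obtained from Lem.~\ref{obs:interval-dag-correspondence} and Lem.~\ref{lem:smart-sweep}. I would first run the constrained sweep on $M^i$ and $\mathcal C_i$. By Lem.~\ref{lem:constrained-sweep}, in $O(k_i+|\mathcal C_i|)$ time and space it either returns a minimum-cost $0\!\to\!k_i$ path $p$ in $\tilde D_i(\mathcal C_i)$ or reports that no such path exists.

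\textbf{Case where no path exists.} Part~(ii) of Prop.~\ref{prop:constrained-correspondence} gives directly that no $\Collapse$-derived plan of $M^i$ satisfies $\mathcal C_i$. Reporting infeasibility is therefore correct.

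\textbf{Case where a path $p$ exists.} I would convert $p$ into a plan exactly as in Lem.~\ref{obs:interval-dag-correspondence}. Each trace arc $(t,t{+}1)$ of $p$ contributes $M^i(t{+}1)$ at time $t{+}1$. Each collapse arc $a\!\to\!b$ with anchor $v$ contributes $v$ at every time in $[a,b]$. The result is the plan obtained by applying $\Collapse(\cdot,a,b)$ along the collapse arcs of $p$ in left-to-right order. Part~(i) of Prop.~\ref{prop:constrained-correspondence} then says this plan satisfies $\mathcal C_i$, is $\Collapse$-derived, and has minimum cost among all such plans, with cost equal to $\text{dist}[k_i]$.

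\textbf{Complexity of the conversion.} The intervals spanned by the arcs of $p$ are internally disjoint and cover $[0,k_i]$. Writing out the plan is therefore a single backward walk over the parent pointers followed by filling $k_i+1$ positions, which takes $O(k_i)$ time. Adding this to the sweep's bound gives the claimed $O(k_i+|\mathcal C_i|)$ time and space.

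\textbf{Main obstacle.} The only point needing care is that this reconstruction really stays within budget. A naive reconstruction would materialize collapse arcs, which can be quadratic in number. This does not happen here, because the parent pointers store only the tail of each arc actually used by $p$. I would state this explicitly and otherwise defer to the two cited results.
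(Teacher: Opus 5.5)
Your proposal is correct and matches the paper's proof: run the constrained sweep of Lem.~\ref{lem:constrained-sweep}, then use Prop.~\ref{prop:constrained-correspondence} to read its output either as a minimum-cost plan or as a certificate that no such plan exists. Your explicit accounting for the $O(k_i)$ parent-pointer reconstruction is a harmless elaboration of a step the paper folds into the lemma's bound.
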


\begin{proof}
The sweep of Lem.~\ref{lem:constrained-sweep} returns a minimum-cost $0\!\to\! k_i$ path in $\tilde D_i(\mathcal C_i)$ or certifies that none exists, within the stated bound; by Prop.~\ref{prop:constrained-correspondence} this is a minimum-cost $\Collapse$-derived plan satisfying $\mathcal C_i$, respectively a certificate that no such plan exists.
\end{proof}

For convenience, we use $\OPT(\tilde D_i(\mathcal C_i))$ to denote the cost of the plan of Cor.~\ref{cor:constrained-single-agent}, setting $\OPT(\tilde D_i(\mathcal C_i))=\infty$ when no such plan exists; with $\mathcal C_i=\emptyset$ it equals $\OPT(\{i\})$.

\subsection{Conflict-Based Joint Solver: \CCBS}
\label{subsec:ccbs}

\CCBS\ adapts \CBS~\citep{Sharon2015CBS} to \MAPFC: given a \MAPFC\ instance $(G,M)$, its search tree (Alg.~\ref{alg:ccbs}) explores constraint sets $\mathcal C=\{\mathcal C_i\}_{i\in I}$, with each node storing the per-agent constrained \oneMAPFC plans (Cor.~\ref{cor:constrained-single-agent}) together with a \myemph{lower bound}
$$
\mathrm{LB}(\mathcal C)\;=\;\sum_{i\in I}\OPT\bigl(\tilde D_i(\mathcal C_i)\bigr).
$$
An \myemph{open node} is one that has been generated but not yet expanded; at each iteration \CCBS\ pops the open node of smallest $\mathrm{LB}$ (Line~4), computes its joint plan (Lines~5--7), and either returns the plan if conflict-free (Lines~10--11) or selects a vertex conflict $(i,j,v,t)$ (Line~12) and \myemph{branches} on it by appending the constraint ``$i\notin(v,t)$'' or ``$j\notin(v,t)$'' in the two children (Lines~13--16); a node whose constrained sub-problem is infeasible is discarded on pop (Lines~8--9).

\begin{algorithm}[t]
\caption{\CCBS: joint solver for a non-trivial component.}\label{alg:ccbs}
\begin{algorithmic}[1]
\Require \MAPFC\ instance $(G,M)$.
\Ensure An optimal feasible joint plan $\Pi^\star$.
\State $\mathcal C^{(0)}\gets\{\mathcal C_i=\emptyset\}_{i\in I}$\Comment{root: no constraints}
\State $\textsc{Open}\gets\bigl\{\bigl(\mathcal C^{(0)},\,\textstyle\sum_i\OPT(\tilde D_i(\emptyset))\bigr)\bigr\}$
\While{$\textsc{Open}\ne\emptyset$}
  \State Pop $(\mathcal C,\mathrm{LB})$ with smallest $\mathrm{LB}$ from \textsc{Open}
  \ForAll{$i\in I$}
    \State build $\tilde D_i(\mathcal C_i)$\Comment{Cor.~\ref{cor:constrained-single-agent}}
    \State $\pi^i\gets$ min-cost $0\!\to\! k_i$ path in $\tilde D_i(\mathcal C_i)$
  \EndFor
  \If{some $\pi^i$ does not exist}
    \State \textbf{continue}\Comment{branch infeasible}
  \EndIf
  \If{$(\pi^i)_{i\in I}$ is collision-free}
    \State \Return $(\pi^i)_{i\in I}$
  \EndIf
  \State Pick a vertex conflict $(i,j,v,t)$ in $(\pi^i)_{i\in I}$
  \State $\mathcal C'\gets\mathcal C$ with $(v,t)$ added to $\mathcal C_i$
  \State Push $\bigl(\mathcal C',\,\sum_\ell\OPT(\tilde D_\ell(\mathcal C'_\ell))\bigr)$ onto \textsc{Open}
  \State $\mathcal C''\gets\mathcal C$ with $(v,t)$ added to $\mathcal C_j$
  \State Push $\bigl(\mathcal C'',\,\sum_\ell\OPT(\tilde D_\ell(\mathcal C''_\ell))\bigr)$ onto \textsc{Open}
\EndWhile
\end{algorithmic}
\end{algorithm}

\begin{theorem}[\CCBS\ soundness and completeness]\label{thm:ccbs}
On every \MAPFC\ instance, \CCBS\ terminates and returns a feasible, optimal joint plan.
\end{theorem}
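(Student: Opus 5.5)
We follow the standard \CBS\ optimality argument, adapted to the finite, vertex-conflict-only setting established in Lem.~\ref{lem:no-edge-conflicts}. The plan has three parts: feasibility of any returned plan, termination, and optimality via a covering invariant together with monotone lower bounds.

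\emph{Feasibility.} A plan is returned only on Line~11, after the check that $(\pi^i)_{i\in I}$ has no vertex collision. Each $\pi^i$ is $\Collapse$-derived from $M^i$ by Cor.~\ref{cor:constrained-single-agent}. By Lem.~\ref{lem:no-edge-conflicts}, no edge collision can occur, so the returned plan is a feasible solution of Prob.~\ref{prob:mapfc}.

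\emph{Termination.} Every constraint added on Lines~14 and~16 is a cell $(v,t)$ at which agent~$i$ or~$j$ currently sits. Such a cell lies in $\Reach^i$ (resp.\ $\Reach^j$) by Obs.~\ref{obs:reach-containment}. The cell is not already in $\mathcal C_i$, because $\pi^i$ satisfies $\mathcal C_i$ by Prop.~\ref{prop:constrained-correspondence}. Hence along any root-to-leaf branch each constraint set strictly grows inside the finite set $\bigcup_i \{i\}\times\Reach^i$. Tree depth is therefore bounded by $\sum_i|\Reach^i|$, the branching factor is~$2$, and the tree is finite. Every pop either returns, discards the node, or expands it into two children, so the loop terminates.

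\emph{Optimality.} This is the key step. We prove the invariant that, before each pop, every feasible optimal solution $\Pi^\star$ satisfies the constraints of some open node. Here ``satisfies $\mathcal C$'' means $(\pi^{\star i}(t),t)\notin\mathcal C_i$ for all $i,t$.
\begin{itemize}
\item At the root the invariant holds trivially.
\item When a node $\mathcal C$ satisfied by $\Pi^\star$ is expanded on conflict $(i,j,v,t)$, the solution $\Pi^\star$ is collision-free, so at most one of agents $i,j$ occupies $v$ at time~$t$. Hence $\Pi^\star$ satisfies $\mathcal C'$ or $\mathcal C''$, and the invariant is preserved.
\item Such a node is never discarded on Lines~8--9, since each $\pi^{\star i}$ witnesses feasibility of $\tilde D_i(\mathcal C_i)$ by Cor.~\ref{cor:constrained-single-agent}.
\end{itemize}
For any node satisfied by $\Pi^\star$, $\mathrm{LB}(\mathcal C)\le\sum_i\cost(\pi^{\star i})=\OPT(I)$, because each term $\OPT(\tilde D_i(\mathcal C_i))$ minimizes over a set containing $\pi^{\star i}$. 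When a node with bound $\mathrm{LB}$ is returned, its plan has cost exactly $\mathrm{LB}$, since the returned plans are the per-agent minimizers. Best-first popping gives $\mathrm{LB}\le$ the $\mathrm{LB}$ of the open node covering $\Pi^\star$, which is $\le\OPT(I)$. Feasibility gives the reverse inequality, so the returned plan is optimal.

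We also need existence: the instance always has a feasible solution, namely $M$ itself. Combined with the invariant and termination, the search cannot exhaust \textsc{Open} without returning.

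\emph{Main obstacle.} The delicate point is termination. Unlike classical \CBS, there are no explicit time bounds or a cost threshold to argue from. We must therefore make rigorous that each new constraint is genuinely new and lies in the finite reach set, which rests on Prop.~\ref{prop:constrained-correspondence} guaranteeing that the low-level plans respect the existing constraints. A secondary subtlety is that $\mathrm{LB}$ equals the cost of the node's joint plan. This holds because Lines~5--7 recompute exactly the minimizers whose costs define $\mathrm{LB}$, and any tie-breaking among minimizers does not affect the cost.
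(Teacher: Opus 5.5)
Your proposal is correct and follows essentially the same route as the paper: the standard \CBS\ covering argument (a collision-free optimum satisfies at least one child's constraints), admissibility of $\mathrm{LB}$ under best-first expansion, and termination because each branch adds a cell that, by Prop.~\ref{prop:constrained-correspondence}, is not already in the agent's constraint set, inside a finite cell space, with $M$ guaranteeing that a feasible solution exists. Your version makes explicit what the paper leaves implicit --- the open-list invariant, the fact that a covering node is never discarded, and the fact that a node's joint-plan cost equals its $\mathrm{LB}$ --- and it bounds the constraint space by $\bigcup_i\{i\}\times\Reach^i$ rather than $V\times\{0,\dots,T(M)\}$.
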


\begin{proof}
\emph{Soundness.} The standard CBS argument transfers~\citep{Sharon2015CBS}: adding ``$i\notin(v,t)$'' or ``$j\notin(v,t)$'' to a parent node preserves every solution of the parent that avoids the conflict $(v,t)$, since one of the two conflicting agents must vacate that cell in any feasible solution. Hence the optimal feasible plan survives in some leaf. Best-first expansion by $\mathrm{LB}$ guarantees the first conflict-free node popped is optimal, since $\mathrm{LB}$ is admissible: in any feasible joint plan consistent with $\mathcal C$, each agent's plan satisfies $\mathcal C_i$ and thus costs at least $\OPT(\tilde D_i(\mathcal C_i))$. \emph{Completeness.} The constraint space $V\times\{0,\dots,T(M)\}$ is finite, and each branch enlarges one constraint set (the conflict cell is occupied by both conflicting agents' plans and thus, by Prop.~\ref{prop:constrained-correspondence}, lies in neither~$\mathcal C_i$ nor $\mathcal C_j$), so the search tree is finite. The un-collapsed input $M$ is itself a feasible solution, being collision-free with each $M^i$ derived from itself by zero $\Collapse$ applications. Furthermore $M$ survives to a leaf: at any branch on a conflict $(i,j,v,t)$, since $M$ is collision-free agents $i$ and~$j$ are not both at $v$ at time $t$, so $M$ satisfies at least one of the two child constraints ``$i\notin(v,t)$'' and ``$j\notin(v,t)$'' and remains admissible in that child. A feasible leaf therefore exists. Best-first expansion by $\mathrm{LB}$ reaches it.
\end{proof}

\ifshowexamples
\begin{figure}[t]
\centering
\begin{tikzpicture}[
  ctnode/.style={draw, rounded corners=2pt, align=center, font=\footnotesize, inner sep=3pt},
  lbl/.style={font=\scriptsize, align=center}
]
\node[ctnode] (root) at (0,0) {root: $\mathcal C_1=\mathcal C_2=\emptyset$\\ LB $=2+0=2$};
\node[ctnode] (A) at (-2.1,-1.8) {$\OPT(\tilde D_1(\mathcal C_1))=4$\\ LB $=4+0=4$};
\node[ctnode, draw=teal, fill=teal!10] (B) at (2.1,-1.8) {$\OPT(\tilde D_2(\mathcal C_2))=2$\\ LB $=2+2=4$};
\draw[->] (root) -- node[lbl, above left=-1pt] {$1\notin(b,2)$} (A);
\draw[->] (root) -- node[lbl, above right=-1pt] {$2\notin(b,2)$} (B);
\node[lbl, below] at (A.south) {plans still collide};
\node[lbl, below, text=teal] at (B.south) {conflict-free: returned};
\end{tikzpicture}
\caption{The \CCBS\ constraint tree of Ex.~\ref{ex:joint} on component $I_1=\{1,2\}$. The root's optimal plans collide at $(b,2)$; each child adds one negative constraint. Both children tie at LB~$=4$, but child~A's plans still collide at $(b,1)$ and $(b,3)$, so the search returns child~B's conflict-free plan regardless of tie-breaking.}
\label{fig:ct-tree}
\end{figure}
\fi

\begin{example}[continuing Ex.~\ref{ex:naive}]\label{ex:joint}
\CCBS\ on the running example's non-trivial component $I_1=\{1,2\}$ (Fig.~\ref{fig:ct-tree}) starts at the root with $\mathcal C=\emptyset$ and root LB $=\OPT(\{1\})+\OPT(\{2\})=2+0=2$. The unconstrained joint plan applies $\Collapse(M^1,1,3)$ (giving $\pi^1=\langle a,b,b,b,e\rangle$) and $\Collapse(M^2,0,4)$ (giving $\pi^2=\langle b,b,b,b,b\rangle$), which collide at $(b,1),(b,2),(b,3)$. Branching on the conflict $(1,2,b,2)$ produces two children: child~A forbids agent~$1$ at $(b,2)$ (its $b$-loop is destroyed because $b$ occurs only at $t\in\{1,3\}$, so no forbidden-free window of anchor $b$ contains both endpoints), giving $\OPT(\tilde D_1(\mathcal C_1))=4$ and node LB~$=4$; child~B forbids agent~$2$ at $(b,2)$ (its $b$-loop is destroyed for the same reason), giving $\OPT(\tilde D_2(\mathcal C_2))=2$ via $\Collapse(M^2,1,3)$, with node LB~$=2+2=4$. Expanding child~B yields the joint plan with $\pi^1=\langle a,b,b,b,e\rangle$ (from $\Collapse(M^1,1,3)$) and $\pi^2=\langle b,f,f,f,b\rangle$ (from $\Collapse(M^2,1,3)$); this is collision-free and returned. The total optimum for $I_1$ is $4$; combined with the singleton's $0$, the global optimum is $\OPT(I)=4$.
\end{example}

\subsection{\CBS-Style Optimizations}
\label{subsec:optimizations}
As \CCBS\ (Alg.~\ref{alg:ccbs}) adapts \CBS, many of the algorithmic approaches used to improve the efficiency of \CBS can be applied to \CCBS by slight adaptation to the structure of \MAPFC.
Specifically, we employ
(i)~\myemph{cardinal-conflict prioritization} which branches on conflicts that provably raise the lower bound of both children; 
(ii)~\myemph{disjoint splitting} which replaces the two-child negative split of Alg.~\ref{alg:ccbs} by a negative-vs-positive split on a single agent, cutting the constrained Collapse DAG more aggressively; and 
(iii)~\myemph{conflict bypass} which returns a child immediately when its joint plan is collision-free and matches the parent's lower bound. 
We detail each ingredient, with pseudocode and correctness arguments, in App.~\ref{app:engineering}.

\section{Experimental Evaluation}
\label{sec:experiments}
We evaluate \MAPFCompress\ through the following four questions:
\begin{itemize}
    \item[\textbf{Q1}] What is the computational cost of the per-agent primitive \oneMAPFC\ (i.e., for singleton components)?
    \item[\textbf{Q2}] How does the interaction graph $H$ decompose in practice: what fraction of the instance is singletons, and how large are the non-trivial components?
    \item[\textbf{Q3}] How does \CCBS\ compare with \Judgelight\ on the non-trivial components?
    \item[\textbf{Q4}] How does \MAPFCompress\ compare with \Judgelight\ in solution cost, runtime, and success rate?
\end{itemize}
The underlying premise that motivates our framework is the \myemph{small-component hypothesis}:
on typical instances (which we test through the POGEMA-derived inputs), $H$ is a large mass of singletons plus a residue of small components.
As we will see, the framework's advantage over \Judgelight\ is conditional on this structure.

\subsection{Setup}
\label{subsec:setup}

\paragraph{Problem and platform.} All experiments target \MAPFC\ (Sec.~\ref{sec:formulation}) on the POGEMA suite~\citep{Skrynnik2024POGEMA}, which provides five families: Random (01-random) and Maze (02-mazes), both $32\times 32$; Warehouse (03-warehouse, $33\times 46$); MovingAI city tiles (04-movingai, $64\times 64$); and Puzzle (05-puzzles, $5\times 5$). 
Following~\citet{Tang2026Judgelight}, we evaluate on this suite; the input schedule of each instance is produced by running POGEMA's \algname{BatchAStarAgent} on the scenario.
The benchmark suite consists of $3{,}296$ scenarios: $768$ each in 01-random, 02-mazes, and 03-warehouse, $512$ in 04-movingai, and $480$ in 05-puzzles.

\paragraph{Algorithms compared.} We instantiate the following baselines and configurations. In configuration names we abbreviate \MAPFCompress\ as \DnC\ and \Judgelight\ as \JL.
\begin{description}
  \item[\textnormal{\algname{NoCollapse}.}] Returns the input $M$ unmodified; its cost $\cost(M)$ is the trivial upper bound on $\OPT(I)$.
  \item[\textnormal{\algname{IndepLB}.}] The per-agent optimum $\sum_i \OPT(\{i\})$; a lower bound on $\OPT(I)$.
  \item[\textnormal{\Judgelight.}] The ILP-based baseline of \citet{Tang2026Judgelight}.
  \item[\textnormal{\DnCJL.}] Our algorithmic pipeline (Alg.~\ref{alg:naive}) solving singletons via \oneMAPFC\ (Alg.~\ref{alg:single-agent}) and non-trivial components via \Judgelight.
  \item[\textnormal{\DnCCCBS.}] Our algorithmic pipeline (Alg.~\ref{alg:naive}) solving singletons via \oneMAPFC\ (Alg.~\ref{alg:single-agent}) and non-trivial components via \CCBS\ (using all optimizations described in App.~\ref{app:engineering}; the \CCBS\ configuration used throughout).
  \item[\textnormal{\DnCHyb.}] The configuration we recommend, a regime-aware hybrid. It routes each non-trivial component to the solver that wins in its size regime: (i)~a component with more than $\regimetau$ agents is sent directly to \Judgelight; (ii)~a smaller component is solved by \CCBS\ (as above) under a $1.5$~s budget, falling back to \Judgelight\ on timeout.\footnote{The values $\regimetau$ and $1.5$~s are determined empirically via Q3 (Sec.~\ref{subsec:q3-joint}).}
\end{description}

\paragraph{Metrics.} We report (i)~the \myemph{saving ratio} $1-\mathrm{SoC}/\mathrm{SoC}(M)$, where $\mathrm{SoC}$ denotes the sum-of-costs $\cost(\Pi)$ of Sec.~\ref{sec:formulation}; (ii)~\myemph{total wall-clock} (model construction plus solve); (iii)~the \myemph{singleton fraction}, the fraction of agents in size-one components; and (iv)~the \myemph{success rate}, the fraction of instances a configuration solves within the $30$-second time limit.
\paragraph{Implementation.} All algorithms were implemented in Python, and \Judgelight\ was run from the public release of \citet{Tang2026Judgelight}. \Judgelight\ returns Gurobi's incumbent solution if the per-instance budget expires; in our runs it terminated within the budget on all but two instances (one each in 02-mazes and 03-warehouse), on which it overran the limit but still returned a plan. All experiments were run on an Intel Core Ultra~7 265U ($14$ logical cores, $3.4$~GHz base) running Windows~11 with Python~3.12 and Gurobi~13.0 for the ILP solver. Each (instance, configuration) pair is run five times, where a run covers the complete per-instance pipeline (decomposition, singleton dispatch, and joint solves); the reported wall-clock time is the median over the runs, the reported cost is that of the cheapest successful run (for budget-limited configurations the returned plan can differ across runs near the time limit), and an instance is reported unsolved only when repeated runs fail, in which case the two failing runs settle the outcome and the remaining three are skipped.
Absolute times should nonetheless be read with the implementation in mind: our framework is pure Python, whereas \Judgelight\ delegates its search to a compiled commercial ILP solver, so on per-component solve time our solver is at a disadvantage. 
Code, scenarios, and reproduction scripts are publicly available.\footnote{\url{github.com/CRL-Technion/divide-and-collapse}}

\subsection{Q1: The Per-Agent Primitive Is Near-Instant}
\label{subsec:q1-single-agent}

In this section we evaluate the computational cost of the per-agent primitive \oneMAPFC\ (Q1). Specifically, we run \oneMAPFC\ (Alg.~\ref{alg:single-agent}) in isolation on every agent schedule in the benchmark suite, timed on schedule prefixes spanning $k_i\in\{8,\dots,128\}$ to expose the dependence on the schedule length, for a total of ${\approx}1.5$~million instances. 
The median solve time is $10~\mu$s and no single solve exceeds $150~\mu$s. 
Thus, singleton computation is effectively free from a computational point of view, which, as we will see, is what lets the framework concentrate its time budget on the small coupled residue.

\subsection{Q2: The Interaction Graph Is Mostly Singletons and Small Components}
\label{subsec:q2-decomposition}

\begin{table}[t]
  \centering
  \small
  \setlength{\tabcolsep}{3.5pt}
  \begin{tabular}{lccc}
    \toprule
    Family & Mean $\#$agents & Singleton \% & Median $H$-build \\
    \midrule
    01-random    & $32$  & $45.0\%$ & $1.5$~ms   \\
    02-mazes     & $32$  & $37.9\%$ & $1.8$~ms   \\
    03-warehouse & $112$ & $32.6\%$ & $10.5$~ms  \\
    04-movingai  & $160$ & $37.7\%$ & $15.5$~ms  \\
    05-puzzles   & $3$   & $75.9\%$ & $0.12$~ms  \\
    \midrule
    Overall      & --    & $43.8\%$ & $3.0$~ms   \\
    \bottomrule
  \end{tabular}
  \caption{Per-family decomposition statistics. \emph{Mean \#agents} is the mean number of agents per instance, \emph{Singleton \%} is the per-instance mean of the agent singleton fraction, and \emph{Median $H$-build} the median wall-clock to construct the interaction graph $H$ (Alg.~\ref{alg:naive}).}
  \label{tbl:q2-singleton}
\end{table}

\begin{figure*}[t]
  \centering
  \includegraphics[width=\linewidth]{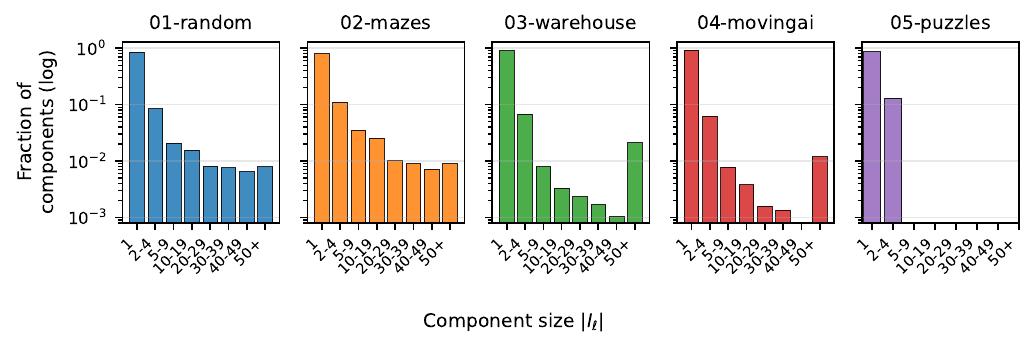}
  \caption{Per-family component-size distribution over the $3{,}296$ instances: the fraction of components in each size bucket, on a log-scale $y$-axis, with singletons in the leftmost (size-$1$) bucket.}
  \label{fig:q2-hist}
\end{figure*}

\begin{table*}[t]
  \centering
  \small
  \begin{tabular}{lrrrrrrr}
    \toprule
    Component size $|I_\ell|$ & $n$ & \CCBS\ success rate & \JL\ success rate & \CCBS\ med. & \JL\ med. & Speedup & Mean saving \\
    \midrule
    $[2, 4]$    & $5{,}236$ & $100\%$  & $100\%$ & $\mathbf{0.26}$~ms   & $4.37$~ms   & $17.0\times$ & $0.271$ \\
    $[5, 9]$    & $890$     & $100\%$  & $100\%$ & $\mathbf{2.52}$~ms   & $13.65$~ms  & $5.4\times$  & $0.389$ \\
    $[10, 19]$  & $542$     & $99.8\%$ & $100\%$ & $\mathbf{20.14}$~ms  & $43.59$~ms  & $2.2\times$  & $0.438$ \\
    $[20, 29]$  & $260$     & $99.6\%$ & $100\%$ & $\mathbf{57.19}$~ms  & $95.60$~ms  & $1.7\times$  & $0.438$ \\
    $[30, 39]$  & $223$     & $96.9\%$ & $100\%$ & $\mathbf{124.49}$~ms & $149.24$~ms & $1.2\times$  & $0.456$ \\
    $[40, 50]$  & $171$     & $93.6\%$ & $100\%$ & $323.84$~ms & $\mathbf{250.22}$~ms & $0.8\times$  & $0.493$ \\
    \midrule
    $[51, 224]$ & $969$ & $38.2\%$ & $100\%$ & $>\!5$~s & $\mathbf{815.4}$~ms & -- & -- \\
    \bottomrule
  \end{tabular}
  \caption{Per-component statistics by size over all $8{,}291$ non-trivial components; $n$ is the number of components per bucket and \emph{success rate} the fraction solved within the $5$-second per-component budget. \emph{Speedup} is the ratio of the two medians shown; \emph{mean saving} is the mean optimal per-component saving ratio $1-\OPT(I_\ell)/\cost(M_{I_\ell})$ over the components solved by both. The faster of the two medians is set in bold. The bottom row aggregates the $969$ components larger than $50$ agents: \CCBS\ exceeds the budget on $61.8\%$ of them, so its median is reported as $>\!5$~s; \emph{Speedup} and \emph{Mean saving}, both defined over the components solved by both, are omitted.}
  \label{tbl:q3-buckets}
\end{table*}

In this section we examine how the interaction graph $H$ decomposes in practice~(Q2), which empirically tests the small-component hypothesis.
Specifically, we run the decomposition pipeline (Alg.~\ref{alg:naive}) on all $3{,}296$ instances and record, per instance, the time to build $H$, the number of agents, the singleton fraction, and the sizes of the non-trivial components; Tbl.~\ref{tbl:q2-singleton} reports per-family statistics and Fig.~\ref{fig:q2-hist} the component-size distribution. 

Building $H$ is effectively free from a computational point of view: the median build time of $3.0$~ms is two orders of magnitude below \Judgelight's median runtime (Sec.~\ref{subsec:q4-framework}), and even on the hardest subset, the $128$ instances with $N{=}256$ (all in 04-movingai), the median rises to ${\approx}0.45$~s, still below \Judgelight's median runtime on that family, with a suite-wide maximum of $0.54$~s.
The mean singleton fraction of $43.8\%$ means that close to half of all agents can be solved through the near-instant \oneMAPFC\ algorithm (Q1). 
However, as half of the agents do not lie in singletons, to support the small-component hypothesis we need to look at the component-size distribution, to which we turn next.

By component count,\footnote{E.g., ten agents forming six singletons plus one four-agent component have an agent singleton fraction of $60\%$ (the statistic reported in Tbl.~\ref{tbl:q2-singleton}) but a component singleton fraction of ${\approx}86\%$ (the statistic reported in Fig.~\ref{fig:q2-hist}).} the decomposition is overwhelmingly trivial: $88\%$ of all components are singletons (Fig.~\ref{fig:q2-hist}). 
The residue itself is small: the $8{,}291$ non-trivial components have a median size of three agents, and an instance hands the joint solver only $0.3$ (05-puzzles) to $5.1$ (04-movingai) components on average: exactly the structure the small-component hypothesis posits. 
The caveat is the tail: the $12\%$ of non-trivial components with $|I_\ell|> 50$, concentrated in 03-warehouse and 04-movingai, hold roughly half of all agents pooled over the suite ($65\%$ on 03-warehouse, $55\%$ on 04-movingai). The framework thus dispatches almost every component trivially, but the agent mass, and hence the runtime, concentrates in a handful of large cores.

\subsection{Q3: \CCBS\ Dominates the Small-to-Medium Residue}
\label{subsec:q3-joint}

In this section we compare \CCBS\ with \Judgelight\ on the non-trivial components (Q3). Specifically, we run both solvers on each of the $8{,}291$ non-trivial components. Both solvers are given a $5$-second per-component time limit. Here, in contrast to the instance-level success rate of Sec.~\ref{subsec:setup}, we report a \myemph{per-component success rate}: the fraction of components solved within the budget. Tbl.~\ref{tbl:q3-buckets} summarizes the results.
\CCBS\ solves~$99.7\%$ of the components of size at most $50$ and \Judgelight\ all of them; where both succeed, \CCBS\ never returns a costlier plan and is lower on $10.8\%$ of them, the result of \Judgelight's greedy safe-oscillation removal (Sec.~\ref{sec:judgelight}); on the components larger than~$50$ agents the effect grows: where both succeed, \CCBS\ produces paths whose cost is cheaper on $325$ of $370$ components, by up to $7.6\%$.

Tbl.~\ref{tbl:q3-buckets} exposes a clean \myemph{phase transition} in component size: \CCBS\ is an order of magnitude faster on the small components that dominate the residue, the advantage narrows as size grows, and \Judgelight\ is faster past the $[30,39]$ band, decisively so beyond $50$ agents, where \CCBS\ solves only~$38.2\%$ within the budget. Noteworthy is that the mean saving \emph{rises} with component size, since larger cores carry more removable oscillation.
We defer an ablation of \CCBS's \MAPF-based optimizations to App.~\ref{app:additional:ablation}.

\subsection{Q4: The Hybrid Matches \Judgelight\ at a Fraction of the Runtime}
\label{subsec:q4-framework}

In this section we compare the full framework with \Judgelight\ (Q4). Specifically, we run all the algorithms described in Sec.~\ref{subsec:setup} on the complete $3{,}296$-instance suite with a time limit of $30$ seconds per instance; 
Fig.~\ref{fig:q4-cactus} shows the runtime distribution and Tbl.~\ref{tbl:q4-aggregate-30s} the aggregate metrics.

\begin{figure}[t]
  \centering
  \includegraphics[width=\linewidth]{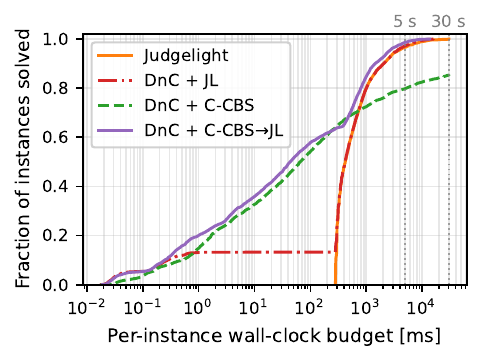}
  \caption{Runtime distribution over the $3{,}296$ instances: cumulative fraction solved against the per-instance wall-clock budget (log scale).}
  \label{fig:q4-cactus}
\end{figure}

Let us first consider \DnCCCBS: 
for small per-instance budgets it solves the largest fraction of instances (Fig.~\ref{fig:q4-cactus}), but its success rate plateaus at $85.4\%$; the unsolved instances contain components on which \CCBS's search tree grows prohibitively large, and increasing the budget does not recover them. 
The hybrid \DnCHyb\ matches \DnCCCBS\ up to that plateau and continues to a success rate of $100\%$ because every component is ultimately handed to \Judgelight: one larger than $\regimetau$ agents is routed there directly, while a smaller one falls back to it whenever \CCBS\ exhausts its $1.5$~s budget.
In contrast, for small per-instance budgets \Judgelight\ and \DnCJL\ solve a smaller fraction of instances than the hybrid. This behavior arises because the ILP solver constructs a model of up to $O(k_i^2)$ variables and $O(k_i^4)$ constraints before search begins, incurring several hundred milliseconds, a cost that \DnCCCBS\ never pays and that accounts for much of the framework's advantage on coordination-light instances. 
Decomposition alone does not accelerate \Judgelight: \DnCJL\ matches \Judgelight\ almost exactly (Tbl.~\ref{tbl:q4-aggregate-30s}), because the per-component model-construction overhead offsets the singleton savings; the speedup comes only from replacing the ILP with \CCBS\ on the non-trivial components.

Under the $30$-second time limit, the hybrid solves all $3{,}296$ instances, and it does so at \Judgelight's solution quality: it matches the mean saving ($0.360$ vs.\ $0.359$). Specifically, it ties with \Judgelight\ on $2{,}697$ instances ($82\%$), improving on $599$ ($18\%$), and losing on none.
Two speedup measures are worth distinguishing: the median per-instance speedup (the median of the per-instance ratios \Judgelight$/$hybrid) is $10.5\times$, and the ratio of the median runtimes is $9.7\times$. The advantage holds per family as well: the recommended configuration is no slower than \Judgelight\ on any of the five families (App.~\ref{app:additional:perfamily}).
Finally, on the $437$ coordination-free instances, where all agents are singletons, the framework reduces to per-agent dispatch (\algname{IndepLB}) and is ${\approx}1{,}900\times$ faster than \Judgelight.

\begin{table}[t]
  \centering
  \small
  \setlength{\tabcolsep}{3pt}
  \begin{tabular}{lrrr}
    \toprule
    Configuration & Mean saving & Median time & Success rate \\
    \midrule
    \multicolumn{4}{@{}l}{\emph{Reference bounds}}\\
    \algname{NoCollapse}                                & $0.000$ & $0.0$~ms    & -- \\
    \algname{IndepLB} (cost LB)                         & $0.393$ & $1.6$~ms    & -- \\
    \midrule
    \multicolumn{4}{@{}l}{\emph{Solvers}}\\
    \Judgelight                                         & $0.359$ & $420.9$~ms  & $99.9\%$  \\
    \DnCJL\                                         & $0.359$ & $421.3$~ms  & $99.9\%$  \\
    \DnCCCBS                                        & $0.369$ & $65.7$~ms   & $85.4\%$  \\
    \DnCHyb\                       & $0.360$ & $43.4$~ms   & $100.0\%$ \\
    \bottomrule
  \end{tabular}
  \caption{Aggregate metrics over the $3{,}296$ instances under the $30$-second time limit. Mean saving is averaged over the instances a configuration solves; median time and success rate are over all $3{,}296$, so the two populations coincide for every row except those that do not solve all instances: \DnCCCBS\ ($2{,}815/3{,}296$), \Judgelight\ ($3{,}294/3{,}296$), and \DnCJL\ ($3{,}295/3{,}296$); \Judgelight\ and \DnCJL\ still return a plan on the instances they exceed the limit on. The first two rows are reference bounds, not solvers: \algname{NoCollapse} is the untouched input (zero saving) and \algname{IndepLB} sums the per-agent optima, an upper bound on achievable saving that is generally jointly infeasible; a success rate is therefore undefined for them.}
  \label{tbl:q4-aggregate-30s}
\end{table}
\section{Discussion and Future Work}\label{sec:future-work}

This work reformulated \MAPFC\ as an exact decomposition over the connected components of the interaction graph, reducing an \NP-hard joint problem to a mass of linear-time singleton solves plus a small residue of joint sub-instances, with \CCBS\ as an exact solver for the residue. Unfortunately, \CCBS's search tree still grows prohibitively on instances with large component sizes.

To this end, the first line of future work is adapting additional \MAPF\ machinery to \MAPFC. This adaptation is non-trivial, since each technique interacts with the constrained Collapse DAG of Prop.~\ref{prop:constrained-correspondence} in a non-standard way. We see the following natural candidates:
(i)~\myemph{interval constraints}, which forbid a vertex over a whole window and would sparsify conflicts on shared cells; here the low level requires no change (a window is a set of forbidden cells) and the challenge is a high-level branching rule that preserves optimality; and
(ii)~\myemph{mutex propagation}~\citep{Zhang2020Mutex}, which prunes jointly-infeasible moves, but whose mutex structure depends on the per-agent collapse availability that is itself a search variable.

The second line concerns problem variants. While we study \MAPFC\ as a one-shot offline post-processing step, its value compounds in \myemph{lifelong} and online settings~\citep{Skrynnik2024Follower}, where plans are continually re-issued and a collapse step sits on the replanning loop. Here, the key question would be how to reuse information from previous search episodes to speed up the planner's running times on hard instances while maintaining the fast running times of microseconds on the easier instances.

\bibliography{aaai2026,mapf_compress}

\appendix

\section{\MAPF-Based Optimizations for \CCBS}\label{app:engineering}

This appendix details the three \MAPF-based optimizations deferred from Sec.~\ref{sec:improvements}: cardinal-conflict prioritization (App.~\ref{app:engineering:cardinal}), disjoint splitting (App.~\ref{app:engineering:disjoint}), and conflict bypass (App.~\ref{app:engineering:bypass}); each adapts a classical \CBS\ optimization~\citep{Sharon2015CBS,Boyarski2015ICBS} to the plans obtainable from the fixed input $M$ by \Collapse\ operations, and the ablation of App.~\ref{app:additional:ablation} measures the empirical contribution of each.

\subsection{Cardinal-Conflict Prioritization}\label{app:engineering:cardinal}

The order in which \CCBS\ branches on conflicts does not affect optimality but may have a substantial effect on the size of the search tree. Following the notion of \myemph{cardinal conflicts}~\citep{Boyarski2015ICBS}, introduced for Improved \CBS\ (\algname{ICBS}), we classify each conflict in a node by the lower-bound increments it induces on the two agents involved, and bias branching towards conflicts that raise the lower bound on both children.

\subsubsection*{Definitions and structural observation}

Let $n=(\mathcal C,\mathrm{LB})$ be a \CCBS\ node, with $\pi^i$ the minimum-cost plan returned by Cor.~\ref{cor:constrained-single-agent} on the constraint set $\mathcal C_i$ and $\Pi=(\pi^i)_{i\in I}$ the joint plan computed at~$n$, and let $(i,j,v,t)$ be a vertex conflict in~$\Pi$ (edge conflicts cannot arise, Lem.~\ref{lem:no-edge-conflicts}). Define the two child LB increments
\begin{align*}
\Delta_i &\;=\; \OPT\!\bigl(\tilde D_i(\mathcal C_i \cup \{(v,t)\})\bigr) - \OPT\!\bigl(\tilde D_i(\mathcal C_i)\bigr), \\
\Delta_j &\;=\; \OPT\!\bigl(\tilde D_j(\mathcal C_j \cup \{(v,t)\})\bigr) - \OPT\!\bigl(\tilde D_j(\mathcal C_j)\bigr).
\end{align*}
Branching on $(i,j,v,t)$ produces two children, one forbidding $(v,t)$ for agent~$i$ and one for agent~$j$; their LBs equal $\mathrm{LB}+\Delta_i$ and $\mathrm{LB}+\Delta_j$, respectively. We say that the conflict is (i)~\myemph{cardinal} if both $\Delta_i$ and $\Delta_j$ are positive, (ii)~\myemph{semi-cardinal} if exactly one of $\Delta_i,\Delta_j$ is positive, and (iii)~\myemph{non-cardinal} if $\Delta_i=\Delta_j=0$.

Conceptually, branching on a cardinal conflict increases the LB of both children, whereas branching on a non-cardinal conflict leaves at least one child with the parent's LB intact and may chain into long non-progressing subtrees. 
Importantly, cardinal conflicts are common in \MAPFC: a forbidden cell $(v,t)$ in the interior of a forbidden-free window $W$ of $v$ (Prop.~\ref{prop:constrained-correspondence}) splits $W$ into two, destroying every collapse arc of $v$ that spans the split point; in particular, when $W=[\first_i(v),\last_i(v)]$ the widest collapse arc of $v$ disappears, and a single constraint can raise the per-agent cost by up to the number of moves the destroyed arcs covered.

\subsubsection*{Algorithm and analysis}

To this end, Alg.~\ref{alg:eng-cardinal} implements cardinal-first conflict selection at a single \CCBS\ node. The procedure scans the vertex conflicts of the joint plan in earliest-time order, classifying each by evaluating the constrained sweep on both candidate children at a cost of $O(k_i+|\mathcal C_i|+k_j+|\mathcal C_j|)$ per conflict examined (Cor.~\ref{cor:constrained-single-agent}). It returns the first cardinal conflict it encounters, falling back to the first semi-cardinal conflict and, absent those, to the first non-cardinal one. Note that the terms $\OPT(\tilde D_i(\mathcal C_i))$, which do not depend on the conflict, are computed once per node and cached.

\begin{algorithm}[t]
\caption{\textsf{SelectConflict}: cardinal-first conflict selection at a \CCBS\ node.}\label{alg:eng-cardinal}
\begin{algorithmic}[1]
\Require Node $n=(\mathcal C,\mathrm{LB})$ with the per-agent plans $\pi^i$ of Cor.~\ref{cor:constrained-single-agent} and joint plan $\Pi=(\pi^i)_{i\in I}$
\Ensure Selected vertex conflict $(i^\star,j^\star,v^\star,t^\star)$ or $\bot$ if $\Pi$ is conflict-free
\State $\mathsf{semi} \gets \bot$; $\mathsf{non} \gets \bot$
\For{each vertex conflict $(i,j,v,t)$ of $\Pi$ in earliest-time-then-lexicographic order} \label{ln:eng-card-enum}
    \State $\Delta_i \gets \OPT(\tilde D_i(\mathcal C_i \cup \{(v,t)\})) - \OPT(\tilde D_i(\mathcal C_i))$
    \State $\Delta_j \gets \OPT(\tilde D_j(\mathcal C_j \cup \{(v,t)\})) - \OPT(\tilde D_j(\mathcal C_j))$
    \If{$\Delta_i > 0$ \textbf{and} $\Delta_j > 0$}
        \State \Return $(i,j,v,t)$ \Comment{first cardinal conflict}
    \ElsIf{$\Delta_i > 0$ \textbf{or} $\Delta_j > 0$}
        \If{$\mathsf{semi} = \bot$} $\mathsf{semi} \gets (i,j,v,t)$ \EndIf
    \Else
        \If{$\mathsf{non} = \bot$} $\mathsf{non} \gets (i,j,v,t)$ \EndIf
    \EndIf
\EndFor
\State \Return $\mathsf{semi}$ if $\mathsf{semi}\ne\bot$, otherwise $\mathsf{non}$
\end{algorithmic}
\end{algorithm}

\begin{lemma}[Cardinal-first preserves \CCBS\ guarantees]\label{lemma:eng-cardinal-optimality}
Replacing the default conflict-selection rule of \CCBS\ (Alg.~\ref{alg:ccbs}) by \textsf{SelectConflict} of Alg.~\ref{alg:eng-cardinal} preserves the soundness and completeness guarantees of Thm.~\ref{thm:ccbs}.
\end{lemma}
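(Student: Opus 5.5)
The plan is to observe that the proof of Thm.~\ref{thm:ccbs} never uses \emph{which} vertex conflict is branched on at a node. It only uses that the selected conflict $(i,j,v,t)$ is a genuine vertex conflict of the node's joint plan $\Pi$. I would therefore reduce the lemma to two facts about \textsf{SelectConflict}. First, it returns $\bot$ iff $\Pi$ is conflict-free. Second, whenever it returns a tuple, that tuple is one of the vertex conflicts enumerated on Line~\ref{ln:eng-card-enum}. Then I would re-walk the soundness and completeness arguments of Thm.~\ref{thm:ccbs}, checking that each step quantifies over an arbitrary conflict.

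For the first fact, note the following. If $\Pi$ has no vertex conflict, the loop is empty and both $\mathsf{semi}$ and $\mathsf{non}$ stay $\bot$. By Lem.~\ref{lem:no-edge-conflicts} this is exactly the collision-free case. If $\Pi$ has at least one conflict, then the first enumerated conflict either causes a return, or is stored into $\mathsf{semi}$ or $\mathsf{non}$. This holds because the three cases of the classification are exhaustive: each $\Delta$ is either positive or not. Hence the output is non-$\bot$. I should also remark that $\Delta_i,\Delta_j\ge 0$, since adding a cell to $\mathcal C_i$ can only remove nodes and arcs of $\tilde D_i$. I should further remark that $\Delta$ may equal $\infty$ when the child is infeasible, which the classification treats as positive. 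This is harmless for selection, although it is not logically required for correctness. The second fact is immediate from the code, since every returned tuple was assigned from the loop variable.

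With these in hand, soundness transfers verbatim. Branching on any conflict $(v,t)$ shared by $i$ and $j$ preserves every feasible solution consistent with the parent, since no feasible plan has both agents at $(v,t)$. $\mathrm{LB}$ remains admissible because it is defined independently of the selection rule. Completeness also transfers. The cell $(v,t)$ is occupied by both $\pi^i$ and $\pi^j$, so by Prop.~\ref{prop:constrained-correspondence} it is in neither $\mathcal C_i$ nor $\mathcal C_j$. Each branch therefore strictly enlarges a constraint set within the finite space $V\times\{0,\dots,T(M)\}$, so the tree stays finite. The input $M$ satisfies one of the two child constraints because it is collision-free. Finally, the extra constrained sweeps performed inside \textsf{SelectConflict} terminate by Cor.~\ref{cor:constrained-single-agent}, so each expansion remains finite.

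The main obstacle is only bookkeeping: making sure the ``default rule'' in Thm.~\ref{thm:ccbs} was truly unconstrained, i.e., that the completeness argument's strict-enlargement step used nothing beyond the conflict being realized by the current plans. I would verify this explicitly rather than cite ``standard CBS'' again.
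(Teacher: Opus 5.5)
Your proposal is correct and follows the same route as the paper: both observe that the soundness and completeness argument of Thm.~\ref{thm:ccbs} holds for an arbitrary vertex conflict of $\Pi$, so changing only which conflict is branched on leaves the guarantees intact. Your extra checks fill in details the paper leaves implicit: that \textsf{SelectConflict} returns a genuine conflict whenever one exists, and that its additional constrained sweeps terminate by Cor.~\ref{cor:constrained-single-agent}.
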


\begin{proof}
Conflict selection in \CCBS\ enters the algorithm only after a node has been popped from the LB-ordered open list, and only to decide which conflict to branch on. The two children produced by branching on $(i,j,v,t)$ are $\mathcal C \cup \{i \notin (v,t)\}$ and $\mathcal C \cup \{j \notin (v,t)\}$ regardless of how the conflict was selected, and Thm.~\ref{thm:ccbs} shows that this binary split is sound and complete for any choice of conflict in $\Pi$. Cardinal-first only changes which conflict is branched on, not the branching mechanism itself. The best-first expansion on LB therefore continues to return an optimum on termination, and the finite state space guarantees termination.
\end{proof}

\begin{example}[continuing Ex.~\ref{ex:joint}]\label{ex:eng-cardinal}
Consider the vertex conflict $(1,2,b,2)$ at the root node: the unconstrained optima keep agent~$1$ waiting at $b$ throughout $[1,3]$ and agent~$2$ throughout~$[0,4]$, so both occupy $(b,2)$. For agent~$1$, forbidding $(b,2)$ splits the single window $[1,3]$ of its only anchor $b$ into~$[1,1]$ and $[3,3]$, destroying the collapse arc, so it must keep the moves $b\!\to\!c\!\to\!b$: its minimum cost rises from $2$ to $4$, giving $\Delta_1=2$. For agent~$2$, the same constraint splits the $b$-window $[0,4]$ into $[0,1]$ and $[3,4]$ and destroys the arc~$0\!\to\!4$, but its second anchor $f$ (window $[1,3]$) still admits a collapse of cost $2$, so its minimum cost rises from $0$ to $2$ and $\Delta_2=2$. Both increments are strictly positive: the conflict is cardinal, and \textsf{SelectConflict} branches on it, raising the LB from $2$ to $4$ on both children, in agreement with the trace of Ex.~\ref{ex:joint}.
\end{example}

\subsection{Disjoint Splitting}\label{app:engineering:disjoint}

The default \CCBS\ branching on a conflict $(i,j,v,t)$ adds the negative constraint $i \notin (v,t)$ in one child and $j \notin (v,t)$ in the other. Following the disjoint-splitting variant of \CBS~\citep{Li2019DisjointSplitting}, we replace this rule by splitting on a single chosen agent, say $i$: the first child carries the negative constraint $i \notin (v,t)$, and the second child carries the positive constraint $i \in (v,t)$, which forces $i$ to occupy $(v,t)$. In the positive child, every other agent whose current plan occupies $(v,t)$ additionally receives the matching negative constraint: agent~$i$ occupies $(v,t)$ in every plan consistent with that child, so no conflict-free solution in its subtree places another agent there, and these constraints prune only colliding plans.

The constrained Collapse DAG is defined for \emph{negative} constraints only, so we instantiate a  positive constraint  by reducing it to a set of negative constraints the primitive already handles. For agent $i$ and time $t$, let
$$
R_i(t)\;=\;\{M^i(t)\}\,\cup\,\{\,w:\ \first_i(w)\le t\le\last_i(w)\,\},
$$
where $w$ ranges over the anchors of~$M^i$, be the finite set of vertices $i$ can occupy at time $t$ under some $\Collapse$-derived plan: either it stays on its trajectory at $M^i(t)$, or it waits at an anchor $w$ with $\first_i(w)\le t\le\last_i(w)$. $R_i(t)$ is the time-$t$ slice of the reach set $\Reach^i$ of Sec.~\ref{sec:method}, so every $\Collapse$-derived plan satisfies $\pi^i(t)\in R_i(t)$ by Obs.~\ref{obs:reach-containment}.

\begin{lemma}[positive constraints reduce to negative ones]\label{lem:positive-constraint}
For a target vertex $v$, the positive constraint $\pi^i(t)=v$ holds for a $\Collapse$-derived plan if and only if that plan satisfies the negative constraint set $\mathcal C^+_i(v,t)=\{(u,t):u\in R_i(t)\setminus\{v\}\}$.
\end{lemma}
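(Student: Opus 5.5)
The plan is to prove the two directions separately. Both rest on the fact that, by Obs.~\ref{obs:reach-containment}, every $\Collapse$-derived plan satisfies $\pi^i(t)\in R_i(t)$, because $R_i(t)$ is exactly the time-$t$ slice of $\Reach^i$. Here ``satisfies the negative constraint set'' means, as for $\mathcal C_i$ in Sec.~\ref{subsec:constrained-dag}, that the plan occupies no cell of $\mathcal C^+_i(v,t)$, i.e.\ $(\pi^i(t),t)\notin\mathcal C^+_i(v,t)$.

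\emph{($\Rightarrow$)} Suppose $\pi^i(t)=v$. Every cell of $\mathcal C^+_i(v,t)$ has the form $(u,t)$ with $u\neq v$. The plan occupies exactly one cell at time $t$, namely $(v,t)$, and it occupies no cells of the form $(\cdot,t)$ at any other time. Hence it avoids every forbidden cell. This direction is immediate and needs neither the reach set nor the $\Collapse$ structure.

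\emph{($\Leftarrow$)} Suppose the $\Collapse$-derived plan $\pi^i$ avoids all cells of $\mathcal C^+_i(v,t)$, and set $u:=\pi^i(t)$. By Obs.~\ref{obs:reach-containment} we have $(u,t)\in\Reach^i$, so $u\in R_i(t)$. If $u\neq v$, then $(u,t)\in\mathcal C^+_i(v,t)$, and the plan would violate a negative constraint. Therefore $u=v$.

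The only point needing care is the step $u\in R_i(t)$, and I will justify it directly from the definition of $\Reach^i$. Its time-$t$ cells are $(M^i(t),t)$ from $\Reach^i_{\text{orig}}$, together with $(w,t)$ for every anchor $w$ with $\first_i(w)\le t\le\last_i(w)$ from $\Reach^i_{\text{anchor}}$. This is precisely $R_i(t)$.

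I will also remark on a degenerate case: if $v\notin R_i(t)$, the set $\mathcal C^+_i(v,t)$ forbids all of $R_i(t)$. The equivalence still holds, because both sides are then unsatisfiable. In that case the positive child is correctly infeasible, and the constrained sweep (Prop.~\ref{prop:constrained-correspondence}) reports this.
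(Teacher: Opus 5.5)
Your proof is correct and follows essentially the same argument as the paper. Both rest on the reach-containment fact $\pi^i(t)\in R_i(t)$ (Obs.~\ref{obs:reach-containment}), after which forbidding $R_i(t)\setminus\{v\}$ leaves $v$ as the only admissible position; the paper splits on whether $v\in R_i(t)$ rather than by direction, while your two-direction version handles the $v\notin R_i(t)$ case automatically.
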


\begin{proof}
Every $\Collapse$-derived plan has $\pi^i(t)\in R_i(t)$. If $v\in R_i(t)$, forbidding every $u\in R_i(t)\setminus\{v\}$ at time~$t$ leaves $\pi^i(t)=v$ as the only admissible value. If $v\notin R_i(t)$, the same constraint set forbids all of $R_i(t)$ and the sub-problem becomes infeasible, and indeed no $\Collapse$-derived plan places $i$ at~$v$ at time~$t$.
\end{proof}

The forbidden set $\mathcal C^+_i(v,t)$ contains fewer than $k_i$ cells, all at the single time $t$, so the positive child is an ordinary constrained Collapse DAG: it is solved, and its feasibility decided, in time $O(k_i+|\mathcal C_i|)$ by Cor.~\ref{cor:constrained-single-agent}, and $\OPT\bigl(\tilde D_i(\mathcal C_i\cup\mathcal C^+_i(v,t))\bigr)$ is an admissible per-agent lower bound. Consequently, the disjoint split inherits the guarantees of Thm.~\ref{thm:ccbs}: the two children cover every $\Collapse$-derived plan (agent~$i$ either occupies $(v,t)$ or it does not), each child's $\OPT$ is a valid lower bound on its restricted plan set, and best-first expansion on $\mathrm{LB}$ still returns an optimum.

Termination is preserved as well. The negative child strictly enlarges $\mathcal C_i$ (the plan of agent~$i$ occupies $(v,t)$, so $(v,t)\notin\mathcal C_i$); the positive child enlarges either $\mathcal C_i$ (by~$\mathcal C^+_i(v,t)$) or, when $\mathcal C^+_i(v,t)\subseteq\mathcal C_i$ already holds, the constraint set of the other conflict agent through the propagated constraint (its plan also occupies $(v,t)$). Every branch thus enlarges some agent's constraint set within the finite cell space, and the finiteness argument of Thm.~\ref{thm:ccbs} applies verbatim.

\subsection{Conflict Bypass}\label{app:engineering:bypass}

\algname{ICBS}~\citep{Boyarski2015ICBS} bypasses a conflict by letting the parent adopt a child plan of equal cost with fewer conflicts, avoiding the split altogether. We use a restricted form of this idea, tailored to \CCBS. When \CCBS\ expands a node $n$ with conflict $(i,j,v,t)$ into children $n_i$ and~$n_j$, the bypass rule is simple: if the joint plan reconstructed at a child is collision-free and its cost matches the parent LB, we return it immediately rather than continuing to branch. Since~$n$ is popped in best-first order, its lower bound is the minimum key of the open list at that moment and hence a lower bound on the optimum of the instance being solved, so a collision-free plan attaining it is optimal. A child whose plan is collision-free but more expensive than the parent does not trigger bypass and is processed normally by \CCBS's branch-and-bound loop. In our experiments this rule fires frequently; its effect is to trim per-node reconstruction work rather than tree size (App.~\ref{app:additional:ablation}).

\section{Additional Experiments}\label{app:additional}

This appendix describes two supporting experiments deferred from Sec.~\ref{sec:experiments}: the ablation of \CCBS's \MAPF-based optimizations (App.~\ref{app:additional:ablation}) and the per-family end-to-end breakdown (App.~\ref{app:additional:perfamily}). All use the platform, baselines, and protocol of Sec.~\ref{subsec:setup}.

\subsection{Ablation of \CCBS's \MAPF-Based Optimizations}\label{app:additional:ablation}

\CCBS's efficiency on the non-trivial residue rests on the three \MAPF-based optimizations of App.~\ref{app:engineering}. To quantify each, we isolate them with an incremental ladder over the $7{,}322$ components of size at most $50$ from Sec.~\ref{subsec:q3-joint} (Fig.~\ref{fig:q3-ablation}): vanilla \CCBS\ (earliest-time conflict selection, classical branching), then cardinal-conflict prioritization (App.~\ref{app:engineering:cardinal}), disjoint splitting (App.~\ref{app:engineering:disjoint}), and conflict bypass (App.~\ref{app:engineering:bypass}), the last being our default; all rungs, vanilla included, run with two exactness-preserving implementation accelerations (deferring a child's constrained solves to pop time under its parent's bound, and maintaining the conflict map incrementally across CT nodes), so the ladder varies only the three search ingredients. We score each variant by its median high-level expansion count, a machine-independent proxy for search effort that, unlike wall-clock, is unaffected by the per-component budget. As in Sec.~\ref{subsec:q3-joint}, success rates are per component, namely the fraction of the $7{,}322$ components solved within the $5$-second budget.

Vanilla \CCBS\ already solves $94.9\%$ of the components, but its search tree grows by orders of magnitude with component size and its timeouts concentrate on the largest cores: the failure mode is tree explosion, not slow node processing. Cardinal-conflict prioritization is the decisive ingredient, shrinking the median expansion count on the large buckets by an order of magnitude and lifting the success rate to $99.3\%$; this is the behavior the structural observation of App.~\ref{app:engineering:cardinal} predicts, since a single forbidden cell often destroys an entire collapse arc, making conflicts that provably raise the lower bound both common and highly informative. Disjoint splitting contributes the remaining coverage on the medium-to-large buckets, reaching the $99.7\%$ success rate of the production solver, whereas conflict bypass leaves the tree essentially unchanged at this scale: its value lies in the per-node work it avoids, not in the nodes it prunes. All three \MAPF-based optimizations preserve optimality by construction.

\begin{figure}[t]
  \centering
  \includegraphics[width=\linewidth]{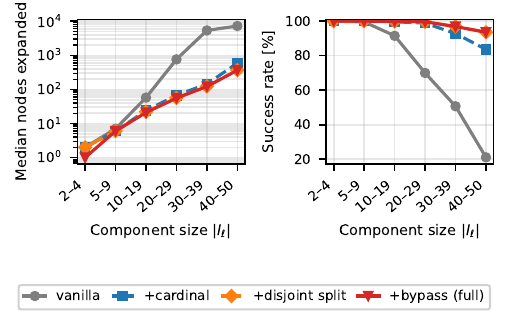}
  \caption{Ablation of \CCBS's \MAPF-based optimizations.
  Left: median high-level nodes expanded by component size (log scale). Right: success rate within the $5$~s per-component budget.}
  \label{fig:q3-ablation}
\end{figure}

\subsection{Per-Family End-to-End Breakdown}\label{app:additional:perfamily}

Tbl.~\ref{tbl:q4-per-family-30s} breaks the Q4 comparison of the recommended \DnCHyb\ against \Judgelight\ (Sec.~\ref{subsec:q4-framework}) down by POGEMA family and isolates the contribution of the regime-aware routing. \DnCHyb's median-runtime advantage is largest on the singleton-dominated families ($\approx\!1{,}200\times$ on 05-puzzles, $21\times$ on 01-random, and $11\times$ on 02-mazes) and it also leads on the two families whose residue holds the largest cores, by $1.5\times$ on 03-warehouse and $1.1\times$ on 04-movingai. Regime-aware routing is what secures the latter: the regime-agnostic variant, which runs every non-trivial component through the $1.5$-second \CCBS\ budget before falling back, is instead ${\approx}3.2\times$ slower than \Judgelight\ on 04-movingai, since \CCBS\ exhausts that budget on the largest cores before \Judgelight\ finishes the job. Regime-aware routing also stabilizes the returned cost, leaving no instance whose reported cost flips across timing repeats, versus twenty under the regime-agnostic variant. Saving is essentially unchanged across all families and both variants, the two agreeing to within $5\cdot 10^{-4}$.

\begin{table}[t]
  \centering
  \small
  \setlength{\tabcolsep}{2.5pt}
  \begin{tabular}{lrrrrr}
    \toprule
    & \multicolumn{2}{c}{\Judgelight} & \multicolumn{2}{c}{\DnCHyb} & \shortstack{No\\routing} \\
    \cmidrule(lr){2-3}\cmidrule(lr){4-5}\cmidrule(lr){6-6}
    Family & Saving & Median & Saving & Median & Median \\
    \midrule
    01-random    & $0.376$ & $344.0$  & $0.378$ & $16.5$   & $18.3$   \\
    02-mazes     & $0.463$ & $377.9$  & $0.464$ & $33.0$   & $39.1$   \\
    03-warehouse & $0.273$ & $1{,}490.8$ & $0.273$ & $1{,}011.8$ & $1{,}432.5$ \\
    04-movingai  & $0.170$ & $640.8$  & $0.170$ & $594.4$  & $2{,}080.9$ \\
    05-puzzles   & $0.507$ & $283.2$  & $0.507$ & $0.2$    & $0.2$    \\
    \bottomrule
  \end{tabular}
  \caption{Per-family comparison of \Judgelight\ and the recommended \DnCHyb: mean saving ratio and median wall-clock (ms) over the full $3{,}296$-instance suite. The final column gives the median wall-clock of the regime-agnostic hybrid (every component sent through \CCBS\ first); the regime-aware hybrid leads \Judgelight\ on every family, whereas the regime-agnostic variant is $3.2\times$ slower on 04-movingai.}
  \label{tbl:q4-per-family-30s}
\end{table}

\end{document}